\DeclareMathOperator*{\argmax}{arg\,max}
\DeclareMathOperator*{\argmin}{arg\,min}
\newcommand{\colorme}[1]{ {\color{gray} #1} }
\newcommand{\algname}{BanditPAM++ }
\renewcommand{\paragraph}[1]{\noindent {\bf #1}}
\title{BanditPAM++: Faster $k$-medoids Clustering}
\author{%
  Mo Tiwari  \\
  Stanford University\\
  \texttt{motiwari@stanford.edu} \\
  \And
  Ryan Kang* \\
  Stanford University\\
  \texttt{txryank@stanford.edu} \\
  \And
  Donghyun Lee* \\
  University College London\\
  \texttt{donghyun.lee.21@ucl.ac.uk} \\
  \And
  Sebastian Thrun \\
  Stanford University\\
  \texttt{thrun@stanford.edu} \\
  \And
  Chris Piech \\
  Stanford University\\
  \texttt{piech@cs.stanford.edu} \\
  \And
  Ilan Shomorony$\ddagger$ \\
  University of Illinois at Urbana-Champaign\\
  \texttt{ilans@illinois.edu} \\
  \And
  Martin Jinye Zhang$\ddagger$ \\
  Carnegie Mellon University \\
  \texttt{martinzh@andrew.cmu.edu} \\
}
\begin{document}


\newtheorem{claim}{Claim}
\newtheorem{corollary}{Corollary}
\newtheorem{theorem}{Theorem}
\newtheorem{definition}{Definition}
\newtheorem{example}{Example}
\newtheorem{lemma}{Lemma}
\newtheorem{proposition}{Proposition}
\newtheorem{remark}{Remark}

\renewcommand{\L}{{\mathcal L}}

\newcommand{\1}{{\bf 1}}
\newcommand{\Z}{{\mathds Z}}
\newcommand{\dis}{{\mathsf{dis}} \,}
\newcommand{\ep}{\epsilon}
\newcommand{\vep}{\varepsilon}

\newcommand{\Star}{\mathcal{S}_{\text{tar}}}
\newcommand{\Sref}{\mathcal{S}_{\text{ref}}}

\newcommand{\bA}{\mathsf{A}}
\newcommand{\bC}{\mathsf{C}}
\newcommand{\bG}{\mathsf{G}}
\newcommand{\bT}{\mathsf{T}}

\newcommand{\A}{{\mathcal A}}
\newcommand{\B}{{\mathcal B}}

\newcommand{\C}{{\mathcal C}}

\newcommand{\Ct}{\tilde{\mathcal C}}

\newcommand{\G}{{\mathcal G}}
\renewcommand{\H}{{\mathcal H}}
\newcommand{\D}{{\mathcal D}}

\newcommand{\Dr}{\mathcal{DR}}
\newcommand{\dof}{\mathbf{D}}
\newcommand{\Dreg}{\dof}

\newcommand{\E}{{\mathcal E}}
\newcommand{\V}{{\mathcal V}}
\renewcommand{\S}{{\mathcal S}}
\newcommand{\I}{{\mathcal I}}
\newcommand{\M}{{\mathcal M}}
\newcommand{\N}{{\mathcal N}}
\newcommand{\U}{{\mathcal U}}
\newcommand{\T}{{\mathcal T}}
\newcommand{\IN}{{\mathbb N}}
\newcommand{\R}{{\mathbb R}}
\newcommand{\Rs}{{\mathcal R}}
\newcommand{\Os}{{\mathcal O}}
\newcommand{\Ps}{{\mathcal P}}
\newcommand{\K}{{\mathcal K}}
\newcommand{\W}{{\mathcal W}}
\newcommand{\X}{{\mathcal X}}
\newcommand{\vX}{{\vec{X}}}
\newcommand{\Y}{{\mathcal Y}}
\newcommand{\vY}{{\vec{Y}}}
\newcommand{\F}{{\mathbb F}}
\newcommand{\dE}{D_\Sigma}
\newcommand{\q}[2]{Q_{s_{#1},d_{#2}}}
\newcommand{\p}[2]{P_{s_{#1},d_{#2}}}
\newcommand{\m}[2]{M_{s_{#1},d_{#2}}}
\newcommand{\ttt}{3 \times 3 \times 3}
\newcommand{\kkk}{K \times K \times K}
\newcommand{\kk}[1]{#1 \times #1 \times #1}
\newcommand{\cT}{{\cal T}}
\newcommand{\cR}{{\cal R}}
\newcommand{\cN}{{\cal N}}
\newcommand{\cC}{{\cal C}}

\newcommand{\s}{{\bf s}}
\newcommand{\bs}{{\bf s}}
\newcommand{\bc}{{\bf c}}

\newcommand{\setx}{\{ x_{(i)}^{K} \}_M }
\newcommand{\setxM}[1]{\{ x_{(i)}^{K} \}_{#1} }

\newcommand{\setX}{\{ X_{(i)}^{K} \}_M }
\newcommand{\setXM}[1]{\{ X_{(i)}^{K} \}_{#1} }

\newcommand{\sety}{\{ y_{(i)}^{K} \}_N }
\newcommand{\setyN}[1]{\{ y_{(i)}^{K} \}_{#1} }

\newcommand{\setY}{\{ Y_{(i)}^{K} \}_N }
\newcommand{\setYN}[1]{\{ Y_{(i)}^{K} \}_{#1} }

\newcommand{\bp}{{\bf p}}
\renewcommand{\r}{{\bf r}}
\newcommand{\x}{{\bf x}}
\newcommand{\y}{{\bf y}}
\newcommand{\z}{{\bf z}}

\newcommand{\Cunc}{C_\text{unc}}

\newcommand{\aln}[1]{\begin{align*}#1\end{align*}}

\newcommand{\al}[1]{\begin{align}#1\end{align}}

\newcounter{numcount}
\setcounter{numcount}{1}

\newcommand{\eqnum}{\stackrel{(\roman{numcount})}{=}\stepcounter{numcount}}
\newcommand{\leqnum}{\stackrel{(\roman{numcount})}{\leq\;}\stepcounter{numcount}}
\newcommand{\geqnum}{\stackrel{(\roman{numcount})}{\geq\;}\stepcounter{numcount}}
\newcommand{\cnt}{$(\roman{numcount})$ \stepcounter{numcount}}
\newcommand{\rescnt}{\setcounter{numcount}{1}}

\newcommand{\batchsize}{{\rm batchsize}}
\newcommand{\arms}{{\rm arms}}

\newif\iflong
\longfalse

\newif\ifdraft
\drafttrue

\newcommand{\iscomment}[1]{
\ifdraft
{\color{blue} \bf{{{{IS --- #1}}}}}
\else
\fi
}

\maketitle

\begin{abstract}
Clustering is a fundamental task in data science with wide-ranging applications.
In $k$-medoids clustering, cluster centers must be actual datapoints and arbitrary distance metrics may be used; these features allow for greater interpretability of the cluster centers and the clustering of exotic objects in $k$-medoids clustering, respectively.
$k$-medoids clustering has recently grown in popularity due to the discovery of more efficient $k$-medoids algorithms.
In particular, recent research has proposed BanditPAM, a randomized $k$-medoids algorithm with state-of-the-art complexity and clustering accuracy.
In this paper, we present BanditPAM++, which accelerates BanditPAM via two algorithmic improvements, and is $O(k)$ faster than BanditPAM in complexity and substantially faster than BanditPAM in wall-clock runtime.
First, we demonstrate that BanditPAM has a special structure that allows the reuse of clustering information \textit{within} each iteration.
Second, we demonstrate that BanditPAM has additional structure that permits the reuse of information \textit{across} different iterations. 
These observations inspire our proposed algorithm, BanditPAM++, which returns the same clustering solutions as BanditPAM but often several times faster.
For example, on the CIFAR10 dataset, BanditPAM++ returns the same results as BanditPAM but runs over 10$\times$ faster.
Finally, we provide a high-performance C++ implementation of BanditPAM++, callable from Python and R, that may be of interest to practitioners at \texttt{https://github.com/motiwari/BanditPAM}.
Auxiliary code to reproduce all of our experiments via a one-line script is available at \texttt{https://github.com/ThrunGroup/BanditPAM\_plusplus\_experiments/}.
\end{abstract}


\section{Introduction \label{sec:intro}}
Clustering is a critical and ubiquitous task in data science and machine learning.
Clustering aims to separate a dataset $\mathcal{X}$ of $n$ datapoints into $k$ disjoint sets that form a partition of the original dataset.
Intuitively, datapoints within a cluster are similar and datapoints in different clusters are dissimilar.
Clustering problems and algorithms have found numerous applications in textual data \cite{navarro2001guided}, social network analysis \cite{mishra2007clustering}, biology \cite{tiwari2020banditpam}, and education \cite{tiwari2020banditpam}. 
\\

A common objective function used in clustering is Equation \ref{eqn:clustering_loss}:

\begin{equation} 
\label{eqn:clustering_loss}
	L(\mathcal{C}) = \sum_{j=1}^n \min_{c \in \mathcal{C}}d(c, x_j).
\end{equation}

Under this loss function, the goal becomes to minimize the distance, defined by the distance function $d$, from each datapoint to its nearest cluster center $c$ among the set of cluster centers $\mathcal{C}$.
Note that this formulation is general and does not require the datapoints to be vectors or assume a specific functional form of the distance function $d$.

Specific choices of $\mathcal{C}$, dataset $\mathcal{X}$, and distance function $d$ give rise to different clustering problems.
Perhaps one of the most commonly used clustering methods is $k$-means clustering \cite{macqueen1967some,lloyd1982least}.
In $k$-means clustering, each datapoint is a vector in $\mathbb{R}^p$ and the distance function $d$ is usually taken to be squared $L_2$ distance; there are no constraints on $\mathcal{C}$ other than that it is a subset of $\mathbb{R}^p$.
Mathematically, the $k$-means objective function is

\begin{equation} 
\label{eqn:k_means_loss}
	L(\mathcal{C}) = \sum_{j=1}^n \min_{c \in \mathcal{C}}\|x_j - c\|^2_2
\end{equation}

subject to $|\mathcal{C}| = k$.

The most common algorithm for the $k$-means problem is Lloyd iteration \cite{lloyd1982least}, which has been improved by other algorithms such as $k$-means++ \cite{arthur2007k}.
These algorithms are widely used in practice due to their simplicity and computational efficiency. 
Despite widespread use in practice, $k$-means clustering suffers from several limitations.
Perhaps its most significant restriction is the choice of $d$ as the squared $L_2$ distance.
This choice of $d$ is for computational efficiency -- as the mean of many points can be efficiently computed under squared $L_2$ distance -- but prevents clustering with other distance metrics that may be preferred in other contexts \cite{overton1983quadratically,jain1988algorithms,bradley1997clustering}.
For example, $k$-means is difficult to use on textual data that necessitates string edit distance \cite{navarro2001guided}, social network analyses using graph metrics \cite{mishra2007clustering}, or sparse datasets (such as those found in recommendation systems \cite{leskovec2020mining}) that lend themselves to other distance functions.
While $k$-means algorithms have been adapted to specific other metrics, e.g., cosine distance \cite{pratap2018faster}, these methods are bespoke to the metric and not readily generalizable.
Another limitation of $k$-means clustering is that the set of cluster centers $\mathcal{C}$ may often be uninterpretable, as each cluster center is (generally) a linear combination of datapoints.
This limitation can be especially problematic when dealing with structured data, such as parse trees in context-free grammars, where the mean of trees is not necessarily well-defined, or images in computer vision, where the mean image can appear as random noise \cite{tiwari2020banditpam, leskovec2020mining}.

In contrast with $k$-means clustering, $k$-\textit{medoids} clustering \cite{kaufman1987clustering,kaufman1990partitioning} requires the cluster centers to be actual datapoints, i.e., requires $\mathcal{C} \subset \mathcal{X}$.
More formally, the objective is to find a set of \textit{medoids} $\mathcal{M} \subset \mathcal{X}$ (versus $\mathbb{R}^p$ in $k$-means) that minimizes

\begin{equation} 
\label{eqn:k_medoids_loss}
	L(\mathcal{M}) = \sum_{j=1}^n \min_{m \in \mathcal{M}}d(m, x_j)
\end{equation}

subject to $|\mathcal{M}| = k$.
Note that there is no restriction on the distance function $d$.

$k$-medoids clustering has several advantages over $k$-means.
Crucially, the requirement that each cluster center is a datapoint leads to greater interpretability of the cluster centers because each cluster center can be inspected.
Furthermore, $k$-medoids supports arbitrary dissimilarity measures; the distance function $d$ in Equation \eqref{eqn:k_medoids_loss} need not be a proper metric, i.e., may be negative, asymmetric, or violate the triangle inequality.
Because $k$-medoids supports arbitrary dissimilarity measures, it can also be used to cluster ``exotic'' objects that are not vectors in $\mathbb{R}^p$, such as trees and strings \cite{tiwari2020banditpam}, without embedding them in $\mathbb{R}^p$.

The $k$-medoids clustering problem in Equation \eqref{eqn:k_medoids_loss} is a combinatorial optimization algorithm that is NP-hard in general \cite{schubert2019faster}; as such, algorithms for $k$-medoids clustering are restricted to heuristic solutions.
A popular early heuristic solution for the $k$-medoids problem was the Partitioning Around Medoids (PAM) algorithm \cite{kaufman1990partitioning}; however, PAM is quadratic in dataset size $n$ in each iteration, which is prohibitively expensive on large dataset.
Improving the computational efficiency of these heuristic solutions is an active area of research.
Recently, \cite{tiwari2020banditpam} proposed BanditPAM, the first subquadratic algorithm for the $k$-medoids problem that matched prior state-of-the-art solutions in clustering quality.
In this work, we propose BanditPAM++, which improves the computational efficiency of BanditPAM while maintaining the same results.
We anticipate these computational improvements will be important in the era of big data, when $k$-medoids clustering is used on huge datasets.

\textbf{Contributions:} We propose a new algorithm, BanditPAM++, that is significantly more computationally efficient than PAM and BanditPAM but returns the same clustering results with high probability.
BanditPAM++ is $O(k)$ faster than BanditPAM in complexity and substantially faster than BanditPAM in actual runtime wall-clock runtime.
Consequently, BanditPAM++ is faster than prior state-of-the-art $k$-medoids algorithms while maintaining the same clustering quality.
BanditPAM++ is based on two observations about the structure of BanditPAM and the $k$-medoids problem, described in Section \ref{sec:algo}.
The first observation leads to a technique that we call \textit{Virtual Arms (VA)}.
The second observation leads to a technique that we refer to as \textit{Permutation-Invariant Caching (PIC)}.
We combine these techniques in BanditPAM++ and prove (in Section \ref{sec:theory}) and experimentally validate (in Section \ref{sec:exps}) that BanditPAM++ returns the same solution to the $k$-medoids problem as PAM and BanditPAM with high probability, but is more computationally efficient.
In some instances, BanditPAM++ is over 10$\times$ faster than BanditPAM.
Additionally, we provide a highly optimized implementation of BanditPAM++ in C++ that is callable from Python and R and may be of interest to practitioners.

\section{Related Work} 

As discussed in Section \ref{sec:intro}, global optimization of the $k$-medoids problem (Equation \eqref{eqn:k_medoids_loss}) is NP-hard in general \cite{schubert2019faster}.
Recent work attempts to perform attempts global optimization and is able to achieve an optimality gap of 0.1\% on one million datapoints, but is restricted to $L_2$ distance and takes several hours to run on commodity hardware \cite{ren2022global}.

Because of the difficulty of global optimization of the $k$-medoids problem, many heuristic algorithms have been developed for the $k$-medoids problem that scale polynomially with the dataset size and number of clusters.
The complexity of these algorithms is measured by their sample complexity, i.e., the number of pairwise distance computations that are computed; these computations have been observed to dominate runtime costs and, as such, sample complexity translates to wall-clock time via an approximately constant factor \cite{tiwari2020banditpam} (this is also consistent with our experiments in Section \ref{sec:exps} and Appendix \ref{app:add_exps}).

Among the heuristic solutions for the $k$-medoids problem, the algorithm with the best clustering loss is Partitioning Around Medoids (PAM) \cite{kaufman1987clustering, kaufman1990partitioning}, which consists of two phases: the BUILD phase and the SWAP phase.
However, the BUILD phase and each SWAP iteration of PAM perform $O(kn^2)$ distance computations, which can be impractical for large datasets or when distance computations are expensive.
We provide greater details about the PAM algorithm in Section \ref{subsec:pam} because it is an important baseline against which we assess the clustering quality of new algorithms.

Though PAM achieves the best clustering loss among heuristic algorithms, the era of huge data has necessitated the development of faster $k$-medoids algorithms in recent years.
These algorithms have typically been divided into two categories: those that agree with PAM and recover the same solution to the $k$-medoids problem but scale quadratically in $n$, and those that sacrifice clustering quality for runtime improvements.
In the former category, \cite{schubert2019faster} proposed a deterministic algorithm called FastPAM1, which maintains the same output as PAM but reduces the computational complexity of each SWAP iteration from $O(kn^2)$ to $O(n^2)$. 
However, this algorithm still scales quadratically in $n$ in every iteration, which is prohibitively expensive on large datasets.

Faster heuristic algorithms have been proposed but these usually sacrifice clustering quality; such algorithms include CLARA \cite{kaufman1990partitioning}, CLARANS \cite{ng2002clarans}, and FastPAM \cite{schubert2019faster}.
While these algorithms scale subquadratically in $n$, they return substantially worse solutions than PAM \cite{tiwari2020banditpam}.
Other algorithms with better sample complexity, such as optimizations for Euclidean space and those based on tabu search heuristics \cite{estivill2001robust} also return worse solutions. 
Finally, \cite{activekmedoids} attempts to minimize the number of \textit{unique} pairwise distances or adaptively estimate these distances or coordinate-wise distances in specific settings \cite{heckel, tavorknn}, but all these approaches sacrifice clustering quality for runtime.

Recently, \cite{tiwari2020banditpam} proposed BanditPAM, a state-of-the-art $k$-medoids algorithm that arrives at the same solution as PAM with high probability in $O(kn \log n)$ time.
BanditPAM borrows techniques from the multi-armed bandit literature to sample pairwise distance computations rather than compute all $O(n^2)$.
In this work, we show that BanditPAM can be made more efficient by reusing distance computations both \textit{within} iterations and \textit{across} iterations.

We note that the use of adaptive sampling techniques and multi-armed bandits to accelerate algorithms has also had recent successes in other work, e.g., to accelerate the training of Random Forests 
\cite{tiwari2022mabsplit}, solve the Maximum Inner Product Search problem \cite{tiwari2022mabsplit}, and more \cite{tiwari2023accelerating}.


\section{Preliminaries and Background}
\label{sec:prelims}

\paragraph{Notation: }
\label{subsec:notation}
We consider a dataset $\mathcal{X}$ of size $n$ (that may contain vectors in $\mathbb{R}^p$ or other objects).
Our goal is to find a solution to the $k$-medoids problem, Equation \eqref{eqn:k_medoids_loss}.
We are also given a dissimilarity function $d$ that measures the dissimilarity between two objects in $\mathcal{X}$.
Note that we do not assume a specific functional form of $d$.
We use $[n]$ to denote the set $\{1, \ldots, n\}$, and $a \wedge b$ (respectively, $a \vee b$) to denote the minimum (respectively, maximum) of $a$ and $b$.

\paragraph{Partitioning Around Medoids (PAM): } 
\label{subsec:pam}
The original Partitioning Around Medoids (PAM) algorithm \cite{kaufman1987clustering, kaufman1990partitioning} consists of two main phases: BUILD and SWAP.
In the BUILD phase, PAM iteratively initializes each medoid in a greedy, one-by-one fashion: in each iteration, it selects the next medoid that would reduce the $k$-medoids clustering loss (Equation \eqref{eqn:k_medoids_loss}) the most, given the prior choices of medoids.
More precisely, given the current set of $l$ medoids $\mathcal{M}_{l} = \{m_1, \cdots, m_{l}\}$, the next point to add as a medoid is:
\begin{equation}
\label{eqn:build-next-medoid}
    m^* = \argmin_{x \in \mathcal{X} \setminus \mathcal{M}_{l}} \sum_{j=1}^n \left[d(x, x_j) \wedge \min_{m' \in \mathcal{M}_{l}} d(m', x_j)\right] 
\end{equation}

The output of the BUILD step is an initial set of the $k$ medoids, around which a local search is performed by the SWAP phase.
The SWAP phase involves iteratively examining all $k(n-k)$ medoid-nonmedoid pairs and performs the swap that would lower the total loss the most. 
More precisely, with $\mathcal{M}$ the current set of $k$ medoids, PAM finds the best medoid-nonmedoid pair to swap:
\begin{equation}
\label{eqn:next-swap}
    (m^*, x^*) = \argmin_{(m,x) \in \mathcal{M} \times (\mathcal{X} \setminus \mathcal{M}) } \sum_{j=1}^n \left[d(x, x_j) \wedge \min_{m' \in \mathcal{M}\setminus \{m\}} d(m', x_j) \right] 
\end{equation}

PAM requires $O(kn^2)$ distance computations for the $k$ greedy searches in the BUILD step and $O(kn^2)$ distance computations for each SWAP iteration \cite{tiwari2020banditpam}.
The quadratic complexity of PAM makes it prohibitively expensive to run on large datasets.
Nonetheless, we describe the PAM algorithm because it has been observed to have the best clustering loss among heuristic solutions to the $k$-medoids problem.
More recent algorithms, such as BanditPAM \cite{tiwari2020banditpam}, achieve the same clustering loss but have a significantly improved complexity of $O(k n \log n)$ in each step.
Our proposed algorithm, BanditPAM++, improves upon the computational complexity of BanditPAM by a factor of $O(k)$.

\paragraph{Sequential Multi-Armed Bandits: }
BanditPAM \cite{tiwari2020banditpam} improves the computational complexity of PAM by converting each step of PAM to a multi-armed bandit problem.
A multi-armed bandit problem (MAB) is defined as a collection of random variables $\{R_1, \ldots, R_n\}$, called actions or arms.
We are commonly interested in the best-arm identification problem, which is to identify the arm with the highest mean, i.e., $\argmax_i \mathbb{E}[R_i]$, with a given probability of possible error $\delta$.
Many algorithms for this problem exist, each of which make distributional assumptions about the random variables $\{R_1, \ldots, R_n\}$; popular ones include the upper confidence bound (UCB) algorithm and successive elimination.
For an overview of common algorithms, we refer the reader to \cite{jamieson2014best}.

We define a \textit{sequential multi-armed bandit problem} to be an ordered sequence of multi-armed bandit problems $Q = \{B_1, \dots, B_T\}$ where each individual multi-armed bandit problem $B_t = \{R_1^t, \ldots, R_n^t\}$ has the same number of arms $n$, with respective, timestep-dependent means $\mu_1^t, \dots, \mu_n^t$. 
At each timestep $t$, our goal is to determine (and take) the best action $a_t = \argmax_i \mathbb{E}[R^t_i]$.
Crucially, our choices of $a_t$ will affect the rewards at future timesteps, i.e., the $R_i^{t'}$ for $t' > t$.
Our definition of a sequential multi-armed bandit problem is similar to non-stationary multi-armed bandit problems, with the added restriction that the only non-stationarity in the problem comes from our previous actions.

We now make a few assumptions for tractability.
We assume that each $R_i^t$ is observed by sampling an element from a set $\mathcal{S}$ with $S$ possible values.
We refer to the values of $\mathcal{S}$ as the \textit{reference points}, where each possible reference point is sampled with equal probability and determines the observed reward.
With some abuse of notation, we write $R^t_i(x_s)$ for the reward observed from arm $R^t_i$ when the latent variable from $\mathcal{S}$ is observed to be $x_s$.
We refer to a sequential multi-armed bandit as \textit{permutation-invariant}, or as a \textit{SPIMAB} (for \underline{S}equential, \underline{P}ermutation-\underline{I}nvariant \underline{M}ulti-\underline{A}rmed \underline{B}andit), if the following conditions hold:

\begin{enumerate}
\item For every arm $i$ and timestep $t$, $R_i^t = f(D_i, \{a_0, a_1, \dots, a_{t-1}\})$ for some known function $f$ and some random variable $D_i$ with mean $\mu_i \coloneqq \mathbb{E}[D_i] = \frac{1}{S}\sum_{s=1}^S D_i(x_s)$,
\item There exists a common set of reference points, $\mathcal{S}$, shared amongst each $D_i$, 
\item It is possible to sample each $D_i$ in $O(1)$ time by drawing from the points in $\mathcal{S}$ without replacement, and
\item $f$ is computable in $O(1)$ time given its inputs. 

\end{enumerate}


Intuitively, the conditions above require that at each timestep, each random variable $R_i^t$ is expressible as a known function of another random variable $D_i$ and the prior actions taken in the sequential multi-armed bandit problem. 
Crucially, $D_i$ \textit{does not depend on the timestep}; $R_i^t$ is only permitted to depend on the timestep through the agent’s previously taken actions $\{a_0, a_1, \dots, a_{t-1}\}$. The SPIMAB conditions imply that if $\mathbb{E}[D_i] = \mu_i$ is known for each $i$, then $\mu_i^t \coloneqq \mathbb{E}[R_i^t]$ is also computable in $O(1)$ time for each $i$ and $t$, i.e., for each arm and timestep. 

\paragraph{BanditPAM: }
\label{subsec:banditpam}
BanditPAM \cite{tiwari2020banditpam} reduces the scaling with $n$ of each step of the PAM algorithm by reformulating each step as a best-arm identification problem.
In PAM, each of the $k$ BUILD steps has complexity $O(n^2)$ and each SWAP iteration has complexity $O(kn^2)$.
In contrast, the complexity of BanditPAM is $O(n \log n)$ for each of the $k$ BUILD steps and $O(kn \log n)$ for each SWAP iteration. 
Fundamentally, BanditPAM achieves this reduction in complexity by sampling distance computations instead of using all $O(n^2)$ pairwise distances in each iteration.
We note that all $k$ BUILD steps of BanditPAM (respectively, PAM) have the same complexity as \textit{each} SWAP iteration of BanditPAM (respectively, PAM).
Since the number of SWAP iterations is usually $O(k)$ (\cite{tiwari2020banditpam}; see also our experiments in the Appendix \ref{app:add_exps}), most of BanditPAM's runtime is spent in the SWAP iterations; this suggests improvements to BanditPAM should focus on expediting its SWAP phase.


\section{BanditPAM++: Algorithmic Improvements to BanditPAM}
\label{sec:algo}

In this section, we discuss two improvements to the BanditPAM algorithm.
We first show how each SWAP iteration of BanditPAM can be improved via a technique we call Virtual Arms (VA).
With this improvement, the modified algorithm can be cast as a SPIMAB.
The conversion to a SPIMAB permits a second improvement via a technique we call the Permutation-Invariant Cache (PIC).
Whereas the VA technique improves only the SWAP phase, the PIC technique improves both the BUILD and SWAP phases.
The VA technique improves the complexity of each SWAP iteration by a factor of $O(k)$, whereas the PIC improves the wall-clock runtime of both the BUILD and SWAP phases.

\subsection{Virtual Arms (VA)}
\label{subsec:va}

As discussed in Section \ref{subsec:banditpam}, most of the runtime of BanditPAM is spent in the SWAP iterations. When evaluating a medoid-nonmedoid pair $(m, x_i)$ to potentially swap, BanditPAM estimates the quantity:
\begin{equation}
\label{eqn:L_m_xc}
    \Delta L_{m, x_i}  =  \frac{1}{n} \sum_{s=1}^n \Delta l_{m, x_i}(x_s), 
\end{equation}

for each medoid $m$ and candidate nonmedoid $x_i$, where 
\begin{equation}
\label{eqn:l_m_xc}
    \Delta l_{m, x_i}(x_s) = \left(d(x_i, x_s) - \min_{m' \in \mathcal{M} \setminus \{m\}} d(m', x_s) \right)\wedge 0
\end{equation}
is the change in clustering loss (Equation \eqref{eqn:k_medoids_loss}) induced on point $x_s$ for swapping medoid $m$ with nonmedoid $x_i$ in the set of medoids $\mathcal{M}$.
Crucially, we will find that for a given $x_s$, each $\Delta l_{m, x_i}(x_s)$ for $m = 1, \ldots, k$, except possibly one, is equal.
We state this observation formally in Theorem \ref{thm:fastpam1_trick}.

\begin{theorem}
\label{thm:fastpam1_trick}
Let $\Delta l_{m, x_i}(x_s)$ be the change in clustering loss induced on point $x_s$ by swapping medoid $m$ with nonmedoid $x_i$, given in Equation \eqref{eqn:l_m_xc}, with $x_s$ and $x_i$ fixed.
Then the values $\Delta l_{m, x_i}(x_s)$ for $m=1,\ldots, k$ are equal, except possibly where $m$ is the medoid for reference point $x_s$.
\end{theorem}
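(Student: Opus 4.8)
The plan is to isolate the only part of $\Delta l_{m, x_i}(x_s)$ that depends on $m$. With $x_s$ and $x_i$ held fixed, the term $d(x_i, x_s)$ and the truncation $\wedge\, 0$ are completely independent of $m$, so the entire $m$-dependence of $\Delta l_{m, x_i}(x_s)$ flows through the single quantity $\min_{m' \in \mathcal{M} \setminus \{m\}} d(m', x_s)$. First I would set up notation for the two nearest medoids of $x_s$: let $m_1 = m_1(x_s)$ be the medoid currently assigned to $x_s$, i.e. $m_1 = \argmin_{m' \in \mathcal{M}} d(m', x_s)$ under whatever fixed tie-breaking rule BanditPAM uses, and let $m_2 = m_2(x_s)$ attain the minimum over $\mathcal{M} \setminus \{m_1\}$, so that $d(m_1, x_s) \le d(m_2, x_s) \le d(m', x_s)$ for every $m' \in \mathcal{M}$.

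Then I would split into two cases according to whether $m$ equals $m_1$. If $m \neq m_1$, then $m_1 \in \mathcal{M} \setminus \{m\}$, and since $m_1$ minimizes $d(\cdot, x_s)$ over all of $\mathcal{M}$, it still minimizes it over the subset $\mathcal{M} \setminus \{m\}$; hence $\min_{m' \in \mathcal{M} \setminus \{m\}} d(m', x_s) = d(m_1, x_s)$, a value that does not depend on which particular $m \neq m_1$ was chosen. Consequently $\Delta l_{m, x_i}(x_s) = \big(d(x_i, x_s) - d(m_1, x_s)\big) \wedge 0$ for all $k-1$ such medoids, so these values are all equal. If instead $m = m_1$, then $m_1$ has been removed and the minimum is attained at $m_2$, giving $\Delta l_{m_1, x_i}(x_s) = \big(d(x_i, x_s) - d(m_2, x_s)\big) \wedge 0$, which is in general a different number; this is precisely the one medoid — ``the medoid for reference point $x_s$'' — that the statement excepts.

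I do not expect a real obstacle here: the argument reduces to the elementary fact that deleting a non-minimizing element of a finite set leaves its minimum unchanged. The only point needing a word of care is the degenerate case of ties, when $d(m_1, x_s) = d(m_2, x_s)$ (or several medoids are equidistant from $x_s$). In that situation the assignment of $x_s$ to a medoid is governed by the fixed tie-breaking rule, and the conclusion still holds verbatim — indeed the ``exceptional'' medoid $m_1$ then produces the same value $\big(d(x_i, x_s) - d(m_1, x_s)\big) \wedge 0$ as the others, so the ``except possibly'' clause is merely a worst-case caveat rather than a genuine exclusion. I would close by noting that this is exactly the structural observation underlying FastPAM1, and that it implies only one of the $k$ per-reference-point loss changes must be tracked individually while the remaining $k-1$ collapse to a single shared value, which is the mechanism that makes the Virtual Arms reformulation possible.
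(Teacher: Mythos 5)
Your proof is correct, and it is a leaner route than the paper's. You work directly from the stated formula in Equation \eqref{eqn:l_m_xc} and observe that the only $m$-dependence sits inside $\min_{m' \in \mathcal{M}\setminus\{m\}} d(m', x_s)$, which is unchanged whenever the deleted element $m$ is not the minimizer $m_{c(s)}$ --- an elementary fact about minima over finite sets that settles the claim in two lines. The paper instead re-derives the per-point loss change semantically, via a four-way case analysis on whether $m$ is $x_s$'s current medoid and whether $x_i$ becomes its new medoid, and then condenses the cases into the indicator expression of Equation \eqref{eqn:fastpam1_trick}. The two arguments rest on the same structural observation, but each buys something different: yours is shorter, stays faithful to the exact formula the theorem cites, and correctly handles ties as a non-issue; the paper's case analysis additionally produces the explicit closed form $-d(m_{c(s)},x_s)+\min[d(m_{c^{(2)}(s)},x_s), d(x_i,x_s)]$ for the exceptional arm, which is precisely the cached three-distance update $-d_1+\min(d_2,d_i)$ used on Line 11 of Algorithm \ref{alg:bp++_swap}. (Incidentally, for the exceptional index $m=m_{c(s)}$ your evaluation $\bigl(d(x_i,x_s)-d(m_{c^{(2)}(s)},x_s)\bigr)\wedge 0$ follows Equation \eqref{eqn:l_m_xc} literally, while the paper's Case~3 reports the uncapped true loss change, which can be positive; since the theorem exempts that index, this discrepancy is immaterial to the claim, but it is worth being aware that the paper's condensed formula and Equation \eqref{eqn:l_m_xc} differ there.)
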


Theorem \ref{thm:fastpam1_trick} is proven in Appendix \ref{app:proofs}.
Crucially, Theorem \ref{thm:fastpam1_trick} tells us that when estimating $\Delta L_{m, x_i}$ in Equation \eqref{eqn:L_m_xc} for fixed $x_i$ and various values of $m$, we can reuse a significant number of the summands across different indices $m$ (across $k$ medoids).
We note that Theorem \ref{thm:fastpam1_trick} has been observed in alternate forms, e.g., as the ``FastPAM1'' trick, in prior work \cite{tiwari2020banditpam}.
However, to the best of our knowledge, we are the first to provide a formal statement and proof of Theorem \ref{thm:fastpam1_trick} and demonstrate its use in an adaptive sampling scheme inspired by multi-armed bandits.

Motivated by Theorem \ref{thm:fastpam1_trick}, we present the SWAP step of our algorithm, BanditPAM++, in Algorithm \ref{alg:bp++_swap}.
BanditPAM++ uses the VA technique to improve the complexity of each SWAP iteration by a factor of $O(k)$.
We call the technique ``virtual arms'' because it uses only a constant number of distance computations to update each of the $k$ ``virtual'' arms for each of the ``real'' arms, where a ``real'' arm corresponds to a datapoint.

\subsection{Permutation-Invariant Caching (PIC):}
\label{subsec:pic}

The original BanditPAM algorithms considers each of the $k(n-k)$ medoid-nonmedoid pairs as arms.
With the VA technique described in Section \ref{subsec:va}, BanditPAM++ instead considers each of the $n$ datapoints (including existing medoids) as arms in each SWAP iteration.
Crucially, this implies that the BUILD phase and each SWAP iteration of BanditPAM++ consider the same set of arms.
It is this observation, induced by the VA technique, that allows us to cast BanditPAM++ as a SPIMAB and implement a second improvement.
We call this second improvement the Permutation-Invariant Cache (PIC).

\begin{table}
\begin{center}
\begin{tabular}{ccc} 
\toprule
General SPIMAB Term & BanditPAM++ BUILD step & BanditPAM++ SWAP step\\ [0.5ex]
\midrule
Arms, $\{R_i^t\}_{j=1}^n$ & Candidate points for medoids & Points to swap in as medoids  \\ 
Reference points, $\mathcal{S}$ & Points of dataset $\mathcal{X}$ & Points of dataset $\mathcal{X}$ \\ 
Timestep, $t$ & $t$-th medoid to be added & $(t-k)$-th swap to be performed  \\ 
$D_i(x_s)$ & Distance between $x_i$ and $x_s$ & Distance between $x_i$ and $x_s$\\ 
$f(D_i, \{a_0, a_1, \ldots, a_{t-1}\})$ & Equation \ref{eqn:swap_f} & Equation \ref{eqn:swap_f} \\ [1ex] 
\bottomrule
\end{tabular}
\end{center}
\caption{BanditPAM++'s two phases can each be cast in the SPIMAB framework. }
\label{table:translation}
\end{table}

We formalize the reduction of BanditPAM++ to a SPIMAB in Table \ref{table:translation}.
In the SPIMAB formulation of BanditPAM++, the set of reference points $\mathcal{S}$ is the same as the set of datapoints $\mathcal{X}$.
Each $D_i$ is a random variable representing the distance from point $x_i$ to one of the sampled reference points and can be sampled in $O(1)$ time without replacement.
Each $\mu_i = \mathbb{E}[D_i]$ corresponds to the average distance from point $x_i$ to all the points in the dataset $\mathcal{X}$.
Each arm $R^t_i$ corresponds to the point that we would add to the set of medoids (for $t \leq k$) or swap in to the set of medoids (for $t > k$), of which there are $n$ at each possible timestep.
Similarly, the actions $\{a_0, \ldots, a_p, \ldots, a_t\}$ correspond to points added to the set of medoids (for $t \leq k$) or swaps performed (for $t > k$).
Equation \eqref{eqn:swap_f} provides the functional forms of $f$ for each of the BUILD and SWAP steps.
\begin{align}
    f(D_i(x_s), \mathcal{A}) =  \left(d(x_i, x_s) - \min_{m' \in \mathcal{M}} d(m', x_s) \right)\wedge 0. \label{eqn:swap_f}
\end{align}

where $\mathcal{M}$ is a set of medoids.
For the BUILD step, $\mathcal{A}$ is a sequence of $t$ actions that results in a set of medoids of size $t \leq k$, and, for the SWAP step, $\mathcal{A}$ is a set of actions that results in a set of medoids $\mathcal{M}$ of size $k$.

The observation that BanditPAM++ is a SPIMAB allows us to develop an intelligent cache design, which we call a \textit{permutation-invariant cache (PIC)}.
We may choose a permutation $\pi$ of the reference points $\mathcal{S} = \mathcal{X}$ and sample distance computations to these reference points in the order of the permutation.
Since we only need to sample some of the reference points, and not all of them, we do not need to compute all $O(n^2)$ pairwise distances.
Crucially, we can also reuse distance computations across different steps of BanditPAM++ to save on computational cost and runtime.

\begin{algorithm}[t]
\caption{
\texttt{BanditPAM++} SWAP Step (
$f_j(D_j, \{a_1, \ldots, a_t\})$, 
$\delta$,
$\sigma_x$,
permutation $\pi$ of $[n]$
)} \label{alg:bp++_swap}
\begin{algorithmic}[1]
\State $\mathcal{S}_{\text{solution}} \leftarrow [n]$ \colorme{\Comment{Set of potential solutions to MAB}}
\State $t' \gets 0$  \colorme{\Comment{Number of reference points evaluated}}
\State For all $(i, j) \in  [n] \times [k]$, set $\hat{\mu}_{i, j} \leftarrow 0$, $C_{i, j} \leftarrow \infty$  \colorme{\Comment{Initial means and CIs for all swaps}}
\While{$t' < n$ and $|\mathcal{S}_{\text{solution}}| > 1$}
    \State $s \gets {\pi(t')}$ \colorme{\Comment{Uses PIC}}
    \ForAll{$i \in \mathcal{S}_{\text{solution}} $}
        \State Let $c(s)$ and $c^{(2)}(s)$ be the indices of $x_s$'s closest and second closest medoids \colorme{\Comment{Cached}}
        \State Compute distance to $x_s$'s closest medoid $d_1 \vcentcolon= d(m_{c(s)}, x_s)$ \colorme{\Comment{Cached}}
        \State Compute distance to $x_s$'s second closest medoid $d_2 \vcentcolon= d(m_{c^{(2)}(s)}, x_s)$ \colorme{\Comment{Cached}}
        \State Compute $d_i \vcentcolon= d(x_i, x_s)$ \colorme{\Comment{Reusing $x_s$'s across calls leads to more cache hits}}
        \State $\hat{\mu}_{i, c(s)} \leftarrow \frac{ t' \hat{\mu}_{i, c(s)} - d_1 + \min(d_2, d_i)}{t' + 1}$ \colorme{\Comment{Update running mean for $x_s$'s medoid}}
        \State $C_{i, c(s)} \gets \sigma_{i} \sqrt{  \frac{ \log(\frac{1}{\delta}) } {t' + 1}}$ \colorme{\Comment{Update confidence interval for $x_s$'s medoid}}
        \ForAll{$j \in \{1, \ldots, k \} \setminus \{c(s)\}$}
            \State $\hat{\mu}_{i, j} \leftarrow \frac{ t' \hat{\mu}_{i, j} + f(D_i(x_s), a_1,\ldots, a_k)}{t' + 1}$ \colorme{\Comment{Update running means; does not depend on $j$}}
            \State $C_{i, j} \gets \sigma_{i} \sqrt{  \frac{ \log(\frac{1}{\delta}) } {t' + 1}}$ \colorme{\Comment{Update confidence intervals; does not depend on $j$}}
        \EndFor
    \EndFor
    
    \State $\mathcal{S}_{\text{solution}} \leftarrow \{i : \exists j \text{ s.t. } \hat{\mu}_{i, j} - C_{i, j} \leq \min_{i, j}(\hat{\mu}_{i, j} + C_{i, j})\}$ \colorme{\Comment{Filter suboptimal arms}}
    \State $t' \leftarrow t' + 1$
\EndWhile
\If{$\vert \mathcal{S}_{\text{solution}} \vert$ = 1}
    \State \textbf{return} $i^* \in \mathcal{S}_{\text{solution}}$ and $j^* = \argmin_j \hat{\mu_{{i^*}, j}}$
\Else
    \State Compute $\mu_{i, j}$ exactly for all $i \in \mathcal{S}_{\text{solution}}$ \colorme{\Comment{At most $3n$ distance computations}}
    \State \textbf{return} $(i^*, j^*) = \argmin_{(i, j): i \in \mathcal{S}_{\text{solution}}} \mu_{i,j}$
\EndIf
\end{algorithmic}
\end{algorithm}

The full BanditPAM++ algorithm is given in Algorithm \ref{alg:bp++_swap}.
Crucially, for each candidate point $x_i$ to swap into the set of medoids on Line 6, we only perform $3$ distance computations (not $k$) to update all $k$ arms, each of which has a mean and confidence interval (CI), on Lines 13-15.
This is permitted by Theorem \ref{thm:fastpam1_trick} and the VA technique which says that $k-1$ ``virtual'' arms for a fixed $i$ will get the same update.
The PIC technique allows us to choose a permutation of reference points (the $x_s$'s) and reuse those $x_s$'s across the BUILD and SWAP steps; as such, many values of $d(x_i, x_s)$ can be cached.

We emphasize that BanditPAM++ uses the same BUILD step as the original BanditPAM algorithm, but with the PIC. The PIC is also used during the SWAP step of BanditPAM++, as is the VA technique.
We prove that the full BanditPAM++ algorithm returns the same results as BanditPAM and PAM in Section \ref{sec:theory} and demonstrate the empirical benefits of both the PIC and VA techniques in Section \ref{sec:exps}.


\section{Analysis of the Algorithm}
\label{sec:theory}

In this section, we demonstrate that, with high probability, BanditPAM++ returns the same answer to the $k$-medoids clustering problem as PAM and BanditPAM while improving the SWAP complexity of BanditPAM by $O(k)$ and substantially decreasing its runtime.
Since the BUILD step of BanditPAM is the same as the BUILD step of BanditPAM++, it is sufficient to show that each SWAP step of BanditPAM++ returns the same swap as the corresponding step of BanditPAM (and PAM).
All of the following theorems are proven in Appendix \ref{app:proofs}.

First, we demonstrate that PIC does not affect the results of BanditPAM++ in Theorem \ref{thm:perminv}:

\begin{theorem} 
\label{thm:perminv}
Let $\mathcal{X} = \{x_1, \dots, x_S\}$ be the reference points of $D_i$, and let $\pi$ be a random permutation of $\{1, \dots, S\}$. Then for any $c \leq S$, 
$\sum_{q=1}^c D_i(x_{\pi(p_q)})$ has the same distribution as $\sum_{q=1}^c D_i(x_{p_q})$, where each $p_q$ is drawn uniformly without replacement from $\{1, \ldots, S\}$.

\end{theorem}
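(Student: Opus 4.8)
The plan is to reduce the claim to a purely combinatorial statement about the distribution of index tuples and then invoke the symmetry of a uniformly random permutation. First I would observe that, because $D_i(x_s)$ is a fixed real number once $s$ is specified --- it is just the distance $d(x_i,x_s)$ in the SPIMAB instantiation of Table \ref{table:translation} --- the quantity $\sum_{q=1}^{c} D_i(x_{j_q})$ is a deterministic function $g(j_1,\dots,j_c)$ of the ordered index tuple $(j_1,\dots,j_c)$. Hence it suffices to show that the random tuple $(\pi(p_1),\dots,\pi(p_c))$ and the random tuple $(p_1,\dots,p_c)$ have the same law as $[S]^{c}$-valued random variables; equality in distribution of the two sums then follows by pushing forward through $g$.

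Next I would identify both of these laws explicitly. By definition of sampling uniformly without replacement, $(p_1,\dots,p_c)$ is uniform over the set $T_c$ of ordered $c$-tuples of distinct elements of $\{1,\dots,S\}$ (there are $S!/(S-c)!$ of them). For the other tuple I would condition on the value of $(p_1,\dots,p_c)$: fix any $(a_1,\dots,a_c)\in T_c$. Since $\pi$ is a uniformly random permutation of $\{1,\dots,S\}$ and (we take it) independent of the $p_q$, the image $(\pi(a_1),\dots,\pi(a_c))$ of these $c$ distinct points is uniform over $T_c$ --- the standard fact that restricting a uniform random permutation to any fixed set of $c$ coordinates yields the uniform distribution over injections of that set into $\{1,\dots,S\}$. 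This conditional law does not depend on $(a_1,\dots,a_c)$, so averaging over the law of $(p_1,\dots,p_c)$ leaves the marginal law of $(\pi(p_1),\dots,\pi(p_c))$ equal to the uniform law on $T_c$ as well.

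Both tuples being uniform on $T_c$, they are equal in distribution, hence $g(\pi(p_1),\dots,\pi(p_c))$ and $g(p_1,\dots,p_c)$ are equal in distribution, which is the assertion of the theorem. I would also remark that in the actual use inside Algorithm \ref{alg:bp++_swap} one has $p_q = q$, i.e.\ the $p_q$ are deterministic, so the independence hypothesis holds trivially and the first $c$ cache entries $\pi(1),\dots,\pi(c)$ are themselves a without-replacement sample; the general statement above subsumes this special case.

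I do not expect a substantive obstacle: the mathematical content is just the elementary exchangeability of a uniform random permutation, and the only places requiring care are bookkeeping ones --- making explicit that $D_i(x_s)$ is a deterministic summand given $s$ so that the problem genuinely concerns index tuples, and stating the mild independence assumption between $\pi$ and the sampling indices that makes the conditioning step clean.
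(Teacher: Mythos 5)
Your proof is correct and rests on the same underlying fact as the paper's — the exchangeability of a uniformly random permutation — but it is executed more carefully, and the difference matters. The paper's proof argues only that each individual index $\pi(p_q)$ has the same \emph{marginal} distribution as $p_q$ (via the observation that any fixed integer lands among the first $c$ positions of $\pi$ with probability $c/S$), and then concludes that the sums agree in distribution. Strictly speaking, term-wise marginal equality does not imply equality in distribution of a sum; one needs the \emph{joint} law of the index tuple. Your argument supplies exactly this: you show that both $(p_1,\dots,p_c)$ and $(\pi(p_1),\dots,\pi(p_c))$ are uniform over the set of ordered $c$-tuples of distinct elements of $\{1,\dots,S\}$ (the latter by conditioning on the $p_q$ and using that a uniform permutation restricted to any $c$ distinct points is a uniform injection), and then push forward through the deterministic map $g(j_1,\dots,j_c)=\sum_q D_i(x_{j_q})$. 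Your closing remark that in Algorithm \ref{alg:bp++_swap} the $p_q$ are in fact deterministic ($p_q=q$), so the relevant claim is simply that $(\pi(1),\dots,\pi(c))$ is a without-replacement sample, correctly identifies how the theorem is actually used. In short: same idea, but your version closes a genuine gap in the paper's write-up rather than merely replicating it.
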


Intuitively, Theorem \ref{thm:perminv} says that instead of randomly sampling new reference points at each iteration of BanditPAM++, we may choose a fixed permutation $\pi$ in advance and sample in permutation order at each step of the algorithm.
This allows us to reuse computation across different steps of the algorithm.

We now show that BanditPAM++ returns the same result as BanditPAM (and PAM) in every SWAP iteration and has the same complexity in $n$ as BanditPAM.
First, we consider a single call to Algorithm~\ref{alg:bp++_swap}.
Let $\mu_i \vcentcolon= \min_{j \in [k]} \mu_{i, j}$ and let $i^* \vcentcolon= \argmin_{i \in [n]} \mu_i$ be the optimal point to swap in to the set of medoids, so that the medoid to swap out is $j^* \vcentcolon= \argmin_{j \in [k]} \mu_{i^*,j}$.
For another candidate point $i \in [n]$ with $i \neq i^*$,
let $\Delta_i \vcentcolon = \mu_i - \mu_{i^*}$, and for $i = i^*$, let $\Delta_i \vcentcolon= \min^{(2)}_j \mu_{i, j} - \min_j \mu_{i, j}$, where $\min^{(2)}_j$ denotes the second smallest value over the indices $j$.  
To state the following results, we will assume that, for a fixed candidate point $i$ and a randomly sampled reference point $x_s$,
the random variable $f(D_i(x_s), \mathcal{A})$ is $\sigma_i$-sub-Gaussian
for some known parameter $\sigma_i$ (which, in practice, can be estimated from the data \cite{tiwari2020banditpam}):

\begin{theorem} 
\label{thm:specific}
For $\delta = 1/kn^3$, with probability at least $1-\tfrac{2}{n}$, Algorithm \ref{alg:bp++_swap}
returns the optimal swap to perform using a total of $M$ distance computations, where
\aln{
E[M] \leq 6n + \sum_{i \in [n]}  \min \left[ \frac{12}{\Delta_i^2} \left(\sigma_i+\sigma_{i^*} \right)^2 \log kn + B, 3n \right].
}
\end{theorem}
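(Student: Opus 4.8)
The plan is to recast Algorithm~\ref{alg:bp++_swap} as a textbook confidence-interval (successive-elimination / UCB) best-arm routine over $n$ ``real'' arms with pseudo-means $\mu_i=\min_{j\in[k]}\mu_{i,j}$, and then carry out the standard bookkeeping. Two structural observations make this possible. First, since the radius $C_{i,j}=\sigma_i\sqrt{\log(1/\delta)/(t'+1)}$ depends only on $i$ and on the number $t'$ of reference points drawn so far, the filter on Line~18 retains a real arm $i$ exactly when $\hat\mu_i-C_i\le\min_{i'}(\hat\mu_{i'}+C_{i'})$, where $\hat\mu_i:=\min_j\hat\mu_{i,j}$ and $C_i$ is the common value of $C_{i,j}$ over $j$; this is the usual ``keep if LCB $\le$ best UCB'' rule applied to the $\hat\mu_i$. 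Second, by Theorem~\ref{thm:perminv}, drawing reference points in the fixed permutation order $\pi$ yields, after $t'$ steps, the same joint law of partial sums as drawing $t'$ reference points uniformly without replacement, so each $\hat\mu_{i,j}$ after $t'$ rounds is, in distribution, the empirical mean of $t'$ draws of a $\sigma_i$-sub-Gaussian variable with mean $\mu_{i,j}$, and $|\hat\mu_i-\mu_i|\le\max_j|\hat\mu_{i,j}-\mu_{i,j}|$.

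I would then define the good event $\mathcal{E}$ on which $|\hat\mu_{i,j}^{(t')}-\mu_{i,j}|\le C_i^{(t')}$ holds simultaneously for all $i\in[n]$, $j\in[k]$, and $t'\in[n]$. A sub-Gaussian tail bound for the empirical mean --- with a without-replacement correction justified by Theorem~\ref{thm:perminv} --- controls each of the $n^2k$ events, and with $\delta=1/(kn^3)$ a union bound gives $\Pr[\mathcal{E}^c]\le 2/n$. On $\mathcal{E}$: the optimal real arm $i^*$ is never eliminated because $\hat\mu_{i^*}-C_{i^*}\le\mu_{i^*}\le\mu_{i'}\le\hat\mu_{i'}+C_{i'}$ for every $i'$, so if the loop ends with a singleton it must be $\{i^*\}$; and any suboptimal real arm $i$ has $\hat\mu_i-C_i\ge\mu_i-2C_i$ while $\hat\mu_{i^*}+C_{i^*}\le\mu_{i^*}+2C_{i^*}$, hence it is removed as soon as $2(C_i+C_{i^*})<\Delta_i$, i.e.\ once $t'$ exceeds a constant multiple of $(\sigma_i+\sigma_{i^*})^2\log(1/\delta)/\Delta_i^2$. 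For the returned medoid index I would run the same argument inside $i^*$ using $\Delta_{i^*}=\min^{(2)}_j\mu_{i^*,j}-\min_j\mu_{i^*,j}$: once $2C_{i^*}<\Delta_{i^*}$ the empirical and true within-$i^*$ argmins agree, and otherwise the algorithm reaches the exact-computation branch, which returns the true optimum deterministically (using that $i^*\in\mathcal{S}_{\text{solution}}$ on $\mathcal{E}$). Pinning down the exact stopping behaviour --- in particular why the within-$i^*$ comparison is resolved before the procedure returns, which is precisely what the $i=i^*$ term in the bound accounts for --- is the step I expect to need the most care.

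For the complexity statement I would bound the number $M$ of distance computations on and off $\mathcal{E}$ separately. Off $\mathcal{E}$ the algorithm still performs at most $O(n^2)$ distance computations ($\le n$ rounds, $O(n)$ new distances each, plus the $\le 3n$ of the exact step), so this regime contributes at most $O(n^2)\cdot\Pr[\mathcal{E}^c]=O(n)$ to $E[M]$, which is absorbed into the leading $6n$. On $\mathcal{E}$, real arm $i$ survives at most $\tfrac{12}{\Delta_i^2}(\sigma_i+\sigma_{i^*})^2\log kn+B$ rounds by the elimination criterion above (using $\log(1/\delta)=\log(kn^3)=O(\log kn)$), with $B$ an absolute constant absorbing the $O(1)$ round-off/ceiling terms and the always-incurred first round; since arm $i$ adds only a constant number of genuinely new distances per surviving round, and in any case adds at most $3n$ over all $\le n$ rounds, its total contribution is $\min[\tfrac{12}{\Delta_i^2}(\sigma_i+\sigma_{i^*})^2\log kn+B,\ 3n]$. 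Summing over $i\in[n]$ and adding the $\le 3n$ cost of the exact-computation fallback and the $O(n)$ off-event term yields the claimed bound on $E[M]$; beyond this accounting, the main obstacle is, as noted, making the stopping/within-$i^*$ argument and the constants consistent with the stated $6n$, $12$, and $B$.
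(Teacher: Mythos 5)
Your proposal follows essentially the same route as the paper's proof: a good event on which all $O(kn^2)$ confidence intervals hold (Hoeffding plus a union bound with $\delta=1/kn^3$ giving failure probability $2/n$), the observation that the optimal arm is never eliminated on that event, a per-arm elimination time of order $\tfrac{12}{\Delta_i^2}(\sigma_i+\sigma_{i^*})^2\log kn$ capped at $3n$, and the decomposition $E[M]\le E[M\mid \text{good event}]+\tfrac{2}{n}(3n^2)$ yielding the leading $6n$. Your explicit reduction to $n$ ``real'' arms with pseudo-means $\mu_i=\min_j\mu_{i,j}$ and your separate treatment of the within-$i^*$ comparison via $\Delta_{i^*}$ are, if anything, slightly more careful than the paper's own write-up, but the argument is the same.
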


Intuitively, Theorem \ref{thm:specific} states that with high probability, each SWAP iteration of BanditPAM++ returns the same result as BanditPAM and PAM.
Since the BUILD step of BanditPAM++ is the same as the BUILD step of BanditPAM, this implies that BanditPAM++ follows the exact same optimization trajectories as BanditPAM and PAM over the entire course of the algorithm with high probability.
We formalize this observation in Theorem \ref{thm:nlogn}:

\begin{theorem} 
\label{thm:nlogn}
If BanditPAM++ is run on a dataset $\X$ with $\delta = 1/kn^3$, then it returns the same set of $k$ medoids as PAM with probability $1-o(1)$. 
Furthermore, 
the total number of distance computations $M_{\rm total}$ required satisfies
\aln{
E[M_{\rm total}] = O\left( n \log kn \right).
}
\end{theorem}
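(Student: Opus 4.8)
\textbf{Proof proposal for Theorem~\ref{thm:nlogn}.}
The plan is to bootstrap the statement from the per-step guarantees already in hand: the BUILD-phase analysis of BanditPAM (which BanditPAM++ inherits), Theorem~\ref{thm:perminv} for the validity of the PIC, and Theorem~\ref{thm:specific} for a single SWAP iteration. I would argue by induction over the steps of the algorithm. Let $\mathcal{E}_t$ be the event that, through its $t$-th step, BanditPAM++ has made exactly the same choices (BUILD insertions and SWAPs) as PAM. Conditioned on $\mathcal{E}_{t-1}$, BanditPAM++ enters step $t$ holding the same medoid set $\mathcal{M}$ that PAM holds, so the best-arm-identification instance it faces at step $t$ has the same optimal arm as PAM's greedy choice at step $t$; hence $P(\mathcal{E}_t^c \mid \mathcal{E}_{t-1})$ is precisely the per-step error probability, and the two claims reduce to (i) summing these errors and (ii) summing the per-step distance-computation costs.

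For the BUILD phase I would invoke Theorem~\ref{thm:perminv}: drawing the reference points in the order of a fixed random permutation $\pi$ yields partial sums distributed exactly as under sampling without replacement, so the estimators driving BanditPAM++'s BUILD step have the same law as in BanditPAM, and the correctness and $O(n\log n)$-per-step bounds proved for BanditPAM's BUILD phase in \cite{tiwari2020banditpam} transfer directly; in particular each of the $k$ BUILD steps errs with probability $O(1/n)$. For the SWAP phase, Theorem~\ref{thm:specific} --- which is stated for a single call to Algorithm~\ref{alg:bp++_swap} with an arbitrary fixed permutation, so the shared $\pi$ across iterations causes no difficulty --- gives, conditioned on $\mathcal{E}_{t-1}$, error probability at most $2/n$ and expected cost $6n + \sum_{i\in[n]}\min\bigl[\tfrac{12}{\Delta_i^2}(\sigma_i+\sigma_{i^*})^2\log kn + B,\ 3n\bigr]$ for that iteration. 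Taking a union bound over the $k$ BUILD steps and the $T$ SWAP iterations (with $T = O(k)$, the number PAM performs; see \cite{tiwari2020banditpam}), $P\bigl(\bigcap_t \mathcal{E}_t\bigr) \ge 1 - O(k/n) = 1 - o(1)$ whenever $k = o(n)$, which is the first assertion; on $\bigcap_t \mathcal{E}_t$ this also pins down $T$, since BanditPAM++'s SWAP phase then runs for exactly as many iterations as PAM's.

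For the complexity bound, on the good event there are $k + T = O(k)$ steps, so linearity of expectation together with the per-step bounds gives a total of $O(k)\cdot O(n\log kn)$ distance computations. Collapsing the per-iteration bound of Theorem~\ref{thm:specific} to $O(n\log kn)$ uses the same mild genericity assumption on the distribution of the arm gaps $\Delta_i$ that underlies BanditPAM's $O(n\log n)$ bound, and the remaining $O(k)$ factor is absorbed into the constant as $k$ is treated as a fixed parameter, consistent with the analysis of BanditPAM; I would also note that the PIC does not alter this asymptotic bound (it only removes recomputation across steps, improving the hidden constant and the wall-clock time). Summing this conditional bound with the negligible off-event contribution then yields $E[M_{\rm total}] = O(n\log kn)$.

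The main obstacle is making the expectation bound rigorous once the conditioning on the good event is dropped. On the low-probability event $\mathcal{E}^c$ that BanditPAM++ diverges from PAM's trajectory, the SWAP phase is only guaranteed to terminate because the clustering loss strictly decreases over the finitely many $k$-subsets, so a priori it could run for many more iterations, each costing up to $O(n^2)$, and the available failure probability ($O(k/n)$) is far too large to offset such a crude bound. I would address this by stating the complexity bound on the good event --- equivalently, as a bound holding with probability $1-o(1)$ --- or by imposing a polynomial cap on the number of SWAP iterations, which does not change the behaviour on $\mathcal{E}$ and makes $E[M_{\rm total}\mathbf{1}_{\mathcal{E}^c}]$ lower order. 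Handling the genuinely unconditional expectation without such a device is the one delicate point; everything else is an assembly of the already-proven per-step statements with a union bound and linearity of expectation.
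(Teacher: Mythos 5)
Your proposal follows essentially the same route as the paper's proof: per-step correctness from Theorem~\ref{thm:specific}, a union bound over the at most $T$ calls to Algorithm~\ref{alg:bp++_swap}, and summation of the per-call expected costs, with the instance-wise bound collapsed to $O(n \log kn)$ under the genericity assumption on the gaps $\Delta_i$ and a uniform bound on the $\sigma_i$. The one obstacle you flag --- controlling the expectation off the good event --- is already dispatched by the paper's standing assumption that $T$ is a hard, predefined cap on the number of SWAP iterations (so the algorithm cannot run longer even if it diverges from PAM's trajectory), which is precisely the fix you propose.
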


\textbf{Note on assumptions:} For Theorem \ref{thm:specific}, we assumed that the data is generated in a way such that the observations $f(D_i(x_s), \mathcal{A})$ follow a sub-Gaussian distribution.
Furthermore, for Theorem \ref{thm:nlogn}, we assume that the $\Delta_i$'s are not all close to $0$, i.e., that we are not in the degenerate arm setting where many of the swaps are equally optimal, and assume that the $\sigma_i$'s are bounded (we formalize these assumptions in Appendix \ref{app:proofs}).
These assumptions have been found to hold in many real-world datasets \cite{tiwari2020banditpam}; see Section \ref{sec:discussion} and Appendices \ref{subsec:app_assumptions}, and \ref{app:proofs} for more formal discussions.

Additionally, we assume that both BanditPAM and BanditPAM++ place a hard constraint $T$ on the maximum number of SWAP iterations that are allowed. 
While the limit on the maximum number of swap steps $T$ may seem restrictive, it is not uncommon to place a maximum number of iterations on iterative algorithms. 
Furthermore, $T$ has been observed empirically to be $O(k)$ \cite{schubert2019faster}, consistent with our experiments in Section \ref{sec:exps} and Appendix \ref{app:add_exps}.

We note that statements similar to Theorems \ref{thm:specific} and \ref{thm:nlogn} can be proven for other values of $\delta$.
We provide additional experiments to understand the effects of the hyperparameters $T$ and $\delta$ in Appendix \ref{app:add_exps}.

\textbf{Complexity in $k$}: The original BanditPAM algorithm scales as $O(k c(k) n \log kn)$, where $c(k)$ is a problem-dependent function of $k$.
Intuitively, $c(k)$ governs the ``hardness'' of the problem as a function of $k$; as more medoids are added, the average distance from each point to its closest medoid will decrease and the arm gaps (the $\Delta_i$'s) will decrease, increasing the sample complexity in Theorem~\ref{thm:specific}. 
With the VA technique, BanditPAM++ removes the explicit factor of $k$: each SWAP iteration has complexity $O(c(k) n \log kn)$.
The implicit dependence on $k$ may still enter through the term $c(k)$ for a fixed dataset, which we observe in our experiments in Section \ref{sec:exps}.


\section{Empirical Results}
\label{sec:exps}

\textbf{Setup:} BanditPAM++ consists of two improvements upon the original BanditPAM algorithm: the VA and PIC techniques. 
We measure the gains of each technique by presenting an ablation study in which we compare the original BanditPAM algorithm (BP), BanditPAM with only the VA technique (BP+VA), BanditPAM with only the PIC (BP+PIC), and BanditPAM with both the VA and PIC techniques (BP++, the final BanditPAM++ algorithm).

First, we demonstrate that all algorithms achieve the same clustering solution and loss as BanditPAM across a variety of datasets and dataset sizes.
In particular, this implies that BanditPAM++ matches prior state-of-the-art in clustering quality.
Next, we investigate the scaling of all four algorithms in both $n$ and $k$ across a variety of datasets and metrics.
We present our results in both sample-complexity and wall-clock runtime.
BanditPAM++ outperforms BanditPAM by up to 10$\times$.
Furthermore, our results demonstrate that each of the VA and PIC techniques improves the runtime of the original BanditPAM algorithm. 

For an experiment on a dataset of size $n$, we sampled $n$ datapoints from the original dataset with replacement.
In all experiments using the PIC technique, we allowed the algorithm to store up to $1,000$ distance computations per point. 
For the wall-clock runtime and sample complexity metrics, we divide the result of each experiment by the number of swap iterations $+1$, where the $+1$ accounts for the complexity of the BUILD step.

\paragraph{Datasets:} We conduct experiments on several public, real-world datasets to evaluate BanditPAM++'s performance: the MNIST dataset, the CIFAR10 dataset, and the 20 Newsgroups dataset. 
The MNIST dataset \cite{lecun1998gradient} contains 70,000 black-and-white images of handwritten digits. 
The CIFAR10 dataset \cite{krizhevsky2009learning} comprises 60,000 images, where each image consists of 32 $\times$ 32 pixels and each pixel has $3$ colors.
The 20 Newsgroups dataset \cite{misc_twenty_newsgroups_113} consist of approximately 18,000 posts on 20 topics split in two subsets: train and test.
We used a fixed subsample of 10,000 training posts and embedding them into $385$-dimensional vectors using a sentence transformer from HuggingFace \cite{huggingface}.
We use the $L_2$, $L_1$, and cosine distances across the MNIST, CIFAR10, and 20 Newsgroups datasets, respectively.

\subsection{Clustering/loss quality}
\label{subsec:loss}

First, we assess the solution quality all four algorithms across various datasets, metrics, and dataset sizes.
Table \ref{table:loss} shows the relative losses of BanditPAM++ with respect to the loss of BanditPAM; the results for BP+PIC and BP+VA are identical and omitted for clarity.
All four algorithms return identical solutions; this demonstrates that neither the VA nor the PIC technique affect solution quality.
In particular, this implies that BanditPAM++ matches the prior state-of-the-art algorithms, BanditPAM and PAM, in clustering quality.

\begin{table}
\begin{center}
\begin{tabular}{l|lllll}
Dataset Size ($n$):    & 10,000 & 15,000 & 20,000 & 25,000 & 30,000 \\ \hline
MNIST ($L_2, k = 10$)          & 1.00    & 1.00    & 1.00    & 1.00    & 1.00    \\
CIFAR10 ($L_1, k = 10$)        & 1.00    & 1.00    & 1.00    & 1.00    & 1.00    \\
20 Newsgroups (cosine, $k=5$) & 1.00    & 1.00    & 1.00   & 1.00    & 1.00    \\
\bottomrule
\end{tabular}
\end{center}
\vspace{0.2em}
\caption{Clustering loss of BanditPAM++, normalized to clustering loss of BanditPAM, across a variety of datasets, metrics, and dataset sizes. In all scenarios, BanditPAM++ matches the loss of BanditPAM (in fact, returns the exact same solution).}
\label{table:loss}
\end{table}

\subsection{Scaling with \texorpdfstring{$k$}{Lg}}
\label{subsec:k_scaling}

\begin{figure}[ht!]
    \begin{subfigure}{.49\textwidth}
      \centering
      \includegraphics[width=\linewidth]{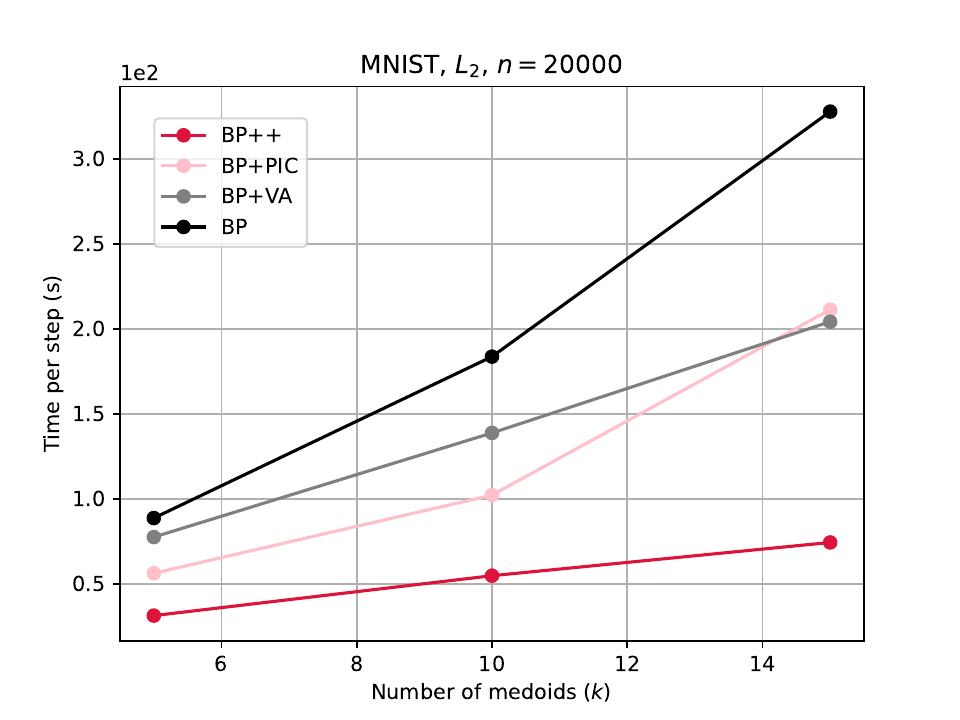}  
      \caption{}
    \end{subfigure}
    \begin{subfigure}{.49\textwidth}
      \centering
      \includegraphics[width=\linewidth]{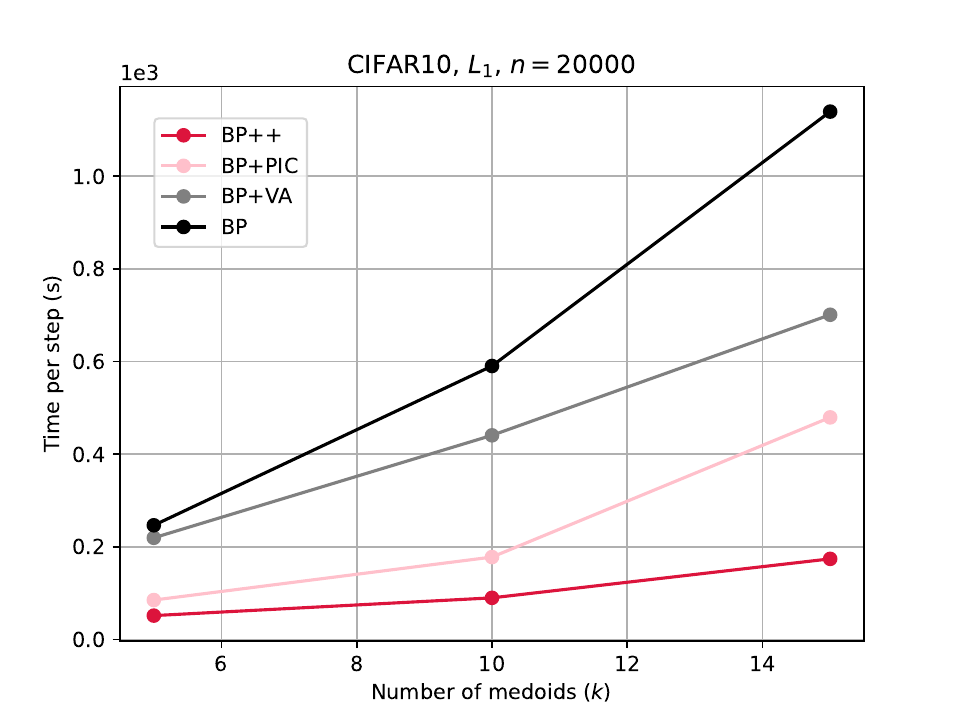} 
      \caption{}
    \end{subfigure}
    \begin{center}
    \begin{subfigure}{.49\linewidth}
        \centering
        \includegraphics[width=\linewidth]{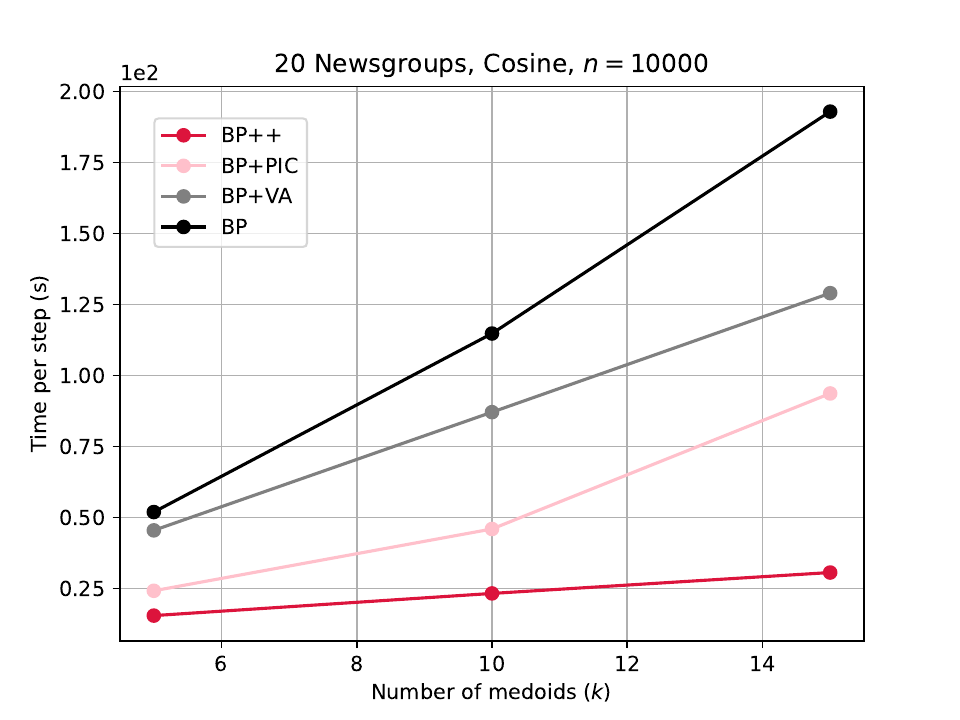} 
        \caption{}
    \end{subfigure}
    \end{center}
    \caption{Average wall-clock runtime versus $k$ for various dataset sizes, metrics, and subsample sizes $n$ BP++ outperforms BP+PIC and BP+VA, both of which outperform BP. Negligible error bars are omitted for clarity.}
    \label{fig:k_runtime}
\end{figure}

Figure \ref{fig:k_runtime} compares the wall-clock runtime scaling with $k$ of BP, BP+PIC, BP+VA, and BP++ on same datasets as Figure \ref{fig:n_runtime}.
Across all data subset sizes, metrics, and values of $k$, BP++ outperforms each of BP+VA and BP+PIC, both of which in turn outperform BP.
As $k$ increases, the performance gap between algorithms using the VA technique and the other algorithms increases.
For example, on the CIFAR10 dataset with $k=15$, BanditPAM++ is over 10$\times$ faster than BanditPAM.
This provides empirical evidence for our claims in Section \ref{sec:theory} that the VA technique improves the scaling of the BanditPAM algorithm with $k$.

We provide similar experiments that demonstrate the scaling with $n$ of BanditPAM++ and each of the baseline algorithms in Appendix \ref{app:add_exps}.
The results are qualitiatively similar to those shown here; in particular, BanditPAM++ outperforms BP+PIC and BP+VA, which both outperform the original BanditPAM algorithm.

\section{Conclusions and Limitations}
\label{sec:discussion}

We proposed BanditPAM++, an improvement upon BanditPAM that produces state-of-the-art results for the $k$-medoids problem.
BanditPAM++ improves upon BanditPAM using the Virtual Arms technique, which improves the complexity of each SWAP iteration by $O(k)$, and the Permutation-Invariant Cache, which allows the reuse of computation across different phases of the algorithm.
We prove that BanditPAM++ returns the same results as BanditPAM (and therefore PAM) with high probability.
Furthermore, our experimental evidence demonstrates the superiority of BanditPAM++ over baseline algorithms; across a variety of datasets, BanditPAM++ is up to 10$\times$ faster than prior state-of-the-art while returning the same results.
While the assumptions of BanditPAM and BanditPAM++ are likely to hold in many practical scenarios, it is important to acknowledge that these assumptions can impose limitations on our approach.
Specifically, when numerous arm gaps are narrow, BanditPAM++ might employ a naïve and fall back to brute force computation.
Similarly, if the distributional assumptions on arm returns are violated, the complexity of BanditPAM++ may be no better than PAM.
We discuss these settings in greater detail in Appendix \ref{subsec:app_assumptions}.

\section*{Acknowledgements}

We would like to thank the anonymous Reviewers, PCs, and ACs for their helpful feedback on our paper.
Mo Tiwari was supported by a Stanford Interdisciplinary Graduate Fellowship (SIGF) and a Standard Data Science Scholarship.
The work of Ilan Shomorony was supported in part by the National Science Foundation (NSF) under grant CCF-2046991.
Martin Zhang is supported by NIH R01 MH115676.

\bibliographystyle{plain}
\bibliography{refs}


\clearpage
\section*{Appendix}
\label{appendix}

\setcounter{section}{0}   

\renewcommand\thefigure{\arabic{figure}}
\renewcommand{\figurename}{Appendix Figure}
\renewcommand{\tablename}{Appendix Table}
\setcounter{figure}{0}      
\setcounter{table}{0}   

\section{BanditPAM: Additional Details}
\label{app:banditpam_details}

In this section, we provide additional information about the original BanditPAM algorithm.
BanditPAM uses a bandit-based randomized algorithm to cast the BUILD and SWAP steps of the original PAM algorithm in terms of the following equation:
\begin{align} \label{eqn:genetic_optimiation}
    \argmin_{x \in \mathcal{S}_{\text{tar}} } \frac{1}{\vert \mathcal{S}_{\text{ref}} \vert } \sum_{x_j \in \mathcal{S}_{\text{ref}}} g_x(x_j) 
\end{align}
for target points $\mathcal{S}_{\text{tar}}$, reference points $\mathcal{S}_{\text{ref}}$, and an objective function $g_x(\cdot)$ that depends on the target point $x$.
Both the BUILD and SWAP problems can be written as instances of Problem \eqref{eqn:genetic_optimiation} with:
\begin{align}
    & \text{BUILD:~~} 
    \mathcal{S}_{\text{tar}}=\mathcal{X} \setminus \mathcal{M}_{l},~
    \mathcal{S}_{\text{ref}} = \mathcal{X},~
    g_x(x_j) = \left(d(x, x_j) - \min_{m' \in \mathcal{M}_{l}} d(m', x_j) \right)  \wedge 0. \label{eqn:build_instance}\\
    & \text{SWAP:~~} 
    \mathcal{S}_{\text{tar}}=\mathcal{M} \times (\mathcal{X} \setminus \mathcal{M}),~
    \mathcal{S}_{\text{ref}} = \mathcal{X},~
    g_x(x_j) =  \left(d(x, x_j) - \min_{m' \in \mathcal{M} \setminus \{m\}} d(m', x_j) \right)\wedge 0. \label{eqn:swap_instance}
\end{align}

where $\mathcal{M}_l$ is the current set of $l \leq k$ medoids.

Crucially, in the SWAP search, each \textit{pair} of medoid-and-nonmedoid points $(m,x)$ is treated as one target point (i.e. arm) in $\mathcal{S}_{\text{tar}}$ in the original BanditPAM formulation.
The \texttt{Adaptive-Search} algorithm, Algorithm \ref{alg:adaptive_search}, is then used for best-arm identification (finding the optimal element of $\mathcal{S}_{\text{tar}}$) for each of the BUILD and SWAP steps.

We also present the \texttt{Adaptive-Search} algorithm, Algorithm \ref{alg:adaptive_search}, specialized for the SWAP step explicitly in Algorithm \ref{alg:bp_swap}, for ease of comparison to Algorithm \ref{alg:bp++_swap}.
We note the differences between the SWAP step of BanditPAM (Algorithm \ref{alg:bp_swap}) and the SWAP step of BanditPAM++ (Algorithm \ref{alg:bp++_swap}).
Whereas the original BanditPAM algorithm considers each medoid-nonmedoid pair as an arm in the best-arm identification problem, BanditPAM++ only considers each nonmedoid as an arm.
(This change, in turn, allows BanditPAM++ to be formulated as a SPIMAB.)
Furthermore, whereas the original BanditPAM algorithm may compute $k$ distances to update all $k$ arms for a fixed nonmedoid (Lines 6-8 of Algorithm \ref{alg:bp_swap}), BanditPAM++ only computes a maximum of three distances (Lines 8-10 of Algorithm \ref{alg:bp++_swap}).
This is what removes the explicit dependence on $k$ from the computational complexity of BanditPAM++, as discussed in Section \ref{sec:theory}.

\begin{algorithm}[h]
\caption{
\texttt{Adaptive-Search} (
$\mathcal{S}_{\text{tar}},
\mathcal{S}_{\text{ref}},
g_x(\cdot)$,
$\delta$,
$\sigma_x$
) \label{alg:adaptive_search}}
\begin{algorithmic}[1]
\State $\mathcal{S}_{\text{solution}} \leftarrow \mathcal{S}_{\text{tar}}$ \colorme{\Comment{Set of potential solutions to Problem \eqref{eqn:genetic_optimiation}}}
\State $t' \gets 0$  \colorme{\Comment{Number of reference points evaluated}}
\State For all $x \in  \mathcal{S}_{\text{tar}}$, set $\hat{\mu}_x \leftarrow 0$, $C_x \leftarrow \infty$  \colorme{\Comment{Initial mean and confidence interval for each arm}}
\While{$t' < \vert \mathcal{S}_{\text{ref}} \vert $ and $|\mathcal{S}_{\text{solution}}| > 1$} 
        \State Draw a reference point $s$ from $\mathcal{S}_{\text{ref}}$ 
        \ForAll{$x \in \mathcal{S}_{\text{solution}} $}
            \State $\hat{\mu}_x \leftarrow \frac{ t' \hat{\mu}_x + g_x(s)}{t' + 1 }$ \colorme{\Comment{Update running mean}}
            \State $C_x \gets \sigma_x \sqrt{  \frac{ \log(\frac{1}{\delta}) } {t' + 1 }}$ \colorme{\Comment{Update confidence interval}        }
        \EndFor
    \State $\mathcal{S}_{\text{solution}} \leftarrow \{x : \hat{\mu}_x - C_x \leq \min_{y}(\hat{\mu}_{y} + C_{y})\}$ \colorme{\Comment{Remove suboptimal points}}
    \State $t' \leftarrow t' + 1$
\EndWhile
\If{$\vert \mathcal{S}_{\text{solution}} \vert$ = 1}
    \State \textbf{return} $x^* \in \mathcal{S}_{\text{solution}}$
\Else
    \State Compute $\mu_x$ exactly for all $x \in \mathcal{S}_{\text{solution}}$
    \State \textbf{return} $x^* = \argmin_{x \in \mathcal{S}_{\text{solution}}} \mu_x$
\EndIf
\end{algorithmic}
\end{algorithm}

\begin{algorithm}[ht]
\caption{
\texttt{BanditPAM SWAP Step} (
$f_j(D_j, a_1, \ldots, a_k)$,
$\delta$,
$\sigma_x$,
) \label{alg:bp_swap}}
\begin{algorithmic}[1]
\State $\mathcal{S}_{\text{solution}} \leftarrow [n] \times [k] = \{(1, 1), \ldots, (1, k), (2, 1), \ldots, (n, k)\}$ \colorme{\Comment{Potential swaps}}
\State $t' \gets 0$  \colorme{\Comment{Number of reference points evaluated}}
\State For all $(i, j) \in  \mathcal{S}_{\text{tar}}$, set $\hat{\mu}_{i,j} \leftarrow 0$, $C_{i,j} \leftarrow \infty$  \colorme{\Comment{Initial mean and confidence intervals}}
\While{$t' < \vert \mathcal{S}_{\text{ref}} \vert $ and $|\mathcal{S}_{\text{solution}}| > 1$}
        \State Draw the next sample $x_s$ from $[n]$ randomly
        \ForAll{$(i,j) \in \mathcal{S}_{\text{solution}} $}
            \State $\hat{\mu}_{i,j} \leftarrow \frac{ t' \hat{\mu}_{i,j} +  f(D_i(x_s), a_1,\ldots, a_k)}{t' + 1 }$ \colorme{\Comment{Update running mean; special case when $j$ is $c(s)$}}
            \State $C_{i,j} \gets \sigma_{i,j} \sqrt{  \frac{ \log(\frac{1}{\delta}) } {t' + 1 }}$
            \colorme{\Comment{Update confidence interval}}
        \EndFor
    \State $\mathcal{S}_{\text{solution}} \leftarrow \{{i,j} : \hat{\mu}_{i,j} - C_{i,j} \leq \min_{i, j}(\hat{\mu}_{i, j} + C_{i, j})\}$ \colorme{\Comment{Remove suboptimal swaps}}
    \State $t' \leftarrow t' + 1$
\EndWhile
\If{$\vert \mathcal{S}_{\text{solution}} \vert$ = 1}
    \State \textbf{return} $(i^*, j^*) \in \mathcal{S}_{\text{solution}}$
\Else
    \State Compute $\mu_{i, j}$ exactly for all $(i^*, j^*) \in \mathcal{S}_{\text{solution}}$
    \State \textbf{return} $(i^*, j^*) = \argmin_{(i, j) \in \mathcal{S}_{\text{solution}}} \mu_{(i,j)}$
\EndIf
\end{algorithmic}
\end{algorithm}

\subsection{Distributional assumptions and violations}
\label{subsec:app_assumptions}

For BanditPAM++ to have computational gains over the original PAM algorithm, several distributional assumptions must be met.
First, we assume that the observations $R_i^t$ are $\sigma_{i,t}$-sub-Gaussian for some values of $\sigma_{i,t}$.
Intuitively, this implies that the samples we observe for $R_i^t$ are representative of their true mean and that the sample mean concentrates about its true mean by Hoeffding's inequality.
This assumption holds, for example, for any bounded dataset and has been found to hold in many real-world datasets \cite{tiwari2020banditpam}.

Another assumption of BanditPAM++ that is made to achieve $O(n \log n)$ complexity is about the distribution of arm means and gaps (the $\Delta_i$'s). 
More specifically, we assume we are never in the degenerate setting where all $\Delta_i$'s are $0$, i.e., where all potential swaps are equally good. 
Reasonable distributions of the $\Delta_i$'s are often observed in practice; for examples and a more formal discussion, we refer the reader to \cite{tiwari2020banditpam} and references therein.

Finally, we also assume that there is a hard limit $T$ on the number of swaps performed. 
This is a common restriction in iterative algorithms and empirically has not been found to significantly degrade clustering results for $T = O(k)$.
In our additional experiments in Appendix \ref{app:add_exps}, we empirically validate this observation in several settings.

We emphasize that the same distributional assumptions are made by both BanditPAM and BanditPAM++.
More specifically, both algorithms will demonstrate superiority over PAM under exactly the same, rather general conditions.
Furthermore, under those conditions, BanditPAM++ will generally outperform BanditPAM.

\section{Proofs of Theorems~\ref{thm:fastpam1_trick}, \ref{thm:perminv}, \ref{thm:specific} and \ref{thm:nlogn}}
\label{app:proofs}
\setcounter{theorem}{0}

In this Appendix, we provide proofs for Theorems \ref{thm:fastpam1_trick}, \ref{thm:perminv}, \ref{thm:specific}, and \ref{thm:nlogn}.

\begin{theorem}
Let $\Delta l_{m, x_i}(x_s)$ be the change in clustering loss induced on point $x_s$ by swapping medoid $m$ with nonmedoid $x_i$, given in Equation \eqref{eqn:l_m_xc}, with $x_s$ and $x_i$ fixed.
Then the values $\Delta l_{m, x_i}(x_s)$ for $m=1,\ldots, k$ are equal, except possibly where $m$ is the medoid for reference point $x_s$.
\end{theorem}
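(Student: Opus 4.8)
The plan is to observe that, with the reference point $x_s$ and the candidate nonmedoid $x_i$ held fixed, the quantity
$\Delta l_{m, x_i}(x_s) = \left(d(x_i, x_s) - \min_{m' \in \mathcal{M} \setminus \{m\}} d(m', x_s)\right) \wedge 0$
depends on the swapped-out medoid $m$ \emph{only} through the inner term $\min_{m' \in \mathcal{M} \setminus \{m\}} d(m', x_s)$, namely the distance from $x_s$ to its nearest medoid once $m$ has been removed. Hence it suffices to show that this inner minimum takes the same value for all but at most one choice of $m$ among $m_1, \ldots, m_k$.

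First I would let $c(s) \in \argmin_{j \in [k]} d(m_j, x_s)$ denote an index of the medoid closest to $x_s$, and split into two cases. If $m \neq m_{c(s)}$, then $m_{c(s)}$ still belongs to $\mathcal{M} \setminus \{m\}$, and since it already attains the minimum of $d(\cdot, x_s)$ over all of $\mathcal{M}$, we get $\min_{m' \in \mathcal{M} \setminus \{m\}} d(m', x_s) = d(m_{c(s)}, x_s)$. This value does not depend on \emph{which} $m \neq m_{c(s)}$ was removed, so $\Delta l_{m, x_i}(x_s)$ is constant over all such $m$, i.e., over $k-1$ of the $k$ medoids. If instead $m = m_{c(s)}$, then deleting $m$ leaves $\min_{m' \in \mathcal{M} \setminus \{m\}} d(m', x_s)$ equal to the distance from $x_s$ to its second-closest medoid, which may differ from $d(m_{c(s)}, x_s)$; this single index is the claimed exception. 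Observing that $m_{c(s)}$ is exactly ``the medoid for reference point $x_s$'' finishes the argument.

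There is essentially no computational obstacle here; the only point requiring a little care is the handling of ties, e.g., when two or more medoids are equidistant from $x_s$. In that case $c(s)$ is not unique, but the argument is unaffected: we fix any one minimizer as $c(s)$, and for every other medoid $m$ the set $\mathcal{M} \setminus \{m\}$ still contains a medoid attaining the minimal distance to $x_s$, so the inner minimum, and hence $\Delta l_{m, x_i}(x_s)$, is unchanged. Thus at most one value — the one indexed by $m_{c(s)}$ — can differ from the common value shared by the remaining $k-1$ medoids.
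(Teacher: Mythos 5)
Your proof is correct, and it rests on the same core observation as the paper's: the quantity $\min_{m' \in \mathcal{M} \setminus \{m\}} d(m', x_s)$ equals $d(m_{c(s)}, x_s)$ for every $m$ other than $x_s$'s closest medoid, so only $m = m_{c(s)}$ can produce a different value. Your route to it is more economical, though: the paper first enumerates four cases (splitting both on whether $m = m_{c(s)}$ and on whether $x_i$ would become $x_s$'s new medoid), condenses them into an indicator-function identity, and only then reads off that the $m$-dependence sits entirely in $\mathbbm{1}_{x_s \in \mathcal{C}_m}$. You instead note immediately that, with $x_i$ and $x_s$ fixed, the outer $\wedge\, 0$ and the term $d(x_i, x_s)$ are $m$-independent, so the entire $m$-dependence of Equation \eqref{eqn:l_m_xc} is carried by the inner minimum; this makes the second case split in the paper's proof unnecessary for the stated claim. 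You also explicitly handle ties among the medoids closest to $x_s$, a point the paper glosses over; your observation that $\mathcal{M} \setminus \{m\}$ still contains a minimizer whenever $m$ is not the designated $m_{c(s)}$ is exactly what is needed there. One caveat worth being aware of (though it does not affect your proof): the paper's four-case analysis describes the true change in clustering loss, which can be positive in the case $m = m_{c(s)}$, whereas Equation \eqref{eqn:l_m_xc} truncates at zero; since the theorem statement refers to Equation \eqref{eqn:l_m_xc}, your decision to argue directly from that formula is the right one.
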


\begin{proof}
Consider the effect of swapping medoid $m$ with nonmedoid $x_i$ on point $x_s$.
To compute $\Delta l_{m, x_i}(x_s)$, we must consider four possible cases depending on whether point $x_s$ was assigned to medoid $m$ as its closest medoid before the swap and whether $x_s$ would be assigned to medoid $x_i$ after the swap.
We denote by $m_{c(s)}$ the medoid $x_s$ is assigned to before the swap, and by $m_{c^{(2)}(s)}$ the second-closest medoid to $x_s$ before the swap.

\underline{Case 1:} $m$ is the current medoid for reference point $x_s$, i.e., $m = m_{c(s)}$, and $x_i$ would become the medoid for $x_s$ after the swap. Then $\Delta l_{m, x_i}(x_s) = d(x_i, x_s) - d(m, x_s) =  d(x_i, x_s) - d(m_{c(s)}, x_s)$.

\underline{Case 2:} $m$ is \textit{not} the current medoid for reference point $x_s$, i.e., $m \neq m_{c(s)}$, and $x_i$ would become the medoid for $x_s$ after the swap. Then $\Delta l_{m, x_i}(x_s) = d(x_i, x_s) - d(m_{c(s)}, x_s)$.

\underline{Case 3:} $m$ is the current medoid for reference point $x_s$, i.e., $m = m_{c(s)}$, and $x_i$ would \textit{not} become the medoid for $x_s$ after the swap. Then $\Delta l_{m, x_i}(x_s) = d(m_{c^{(2)}(s)}, x_s) - d(m, x_s) > 0$.

\underline{Case 4:} $m$ is \textit{not} the current medoid for reference point $x_s$, i.e., $m \neq m_{c(s)}$, and $x_i$ would \textit{not} become the medoid for $x_s$ after the swap. Then $\Delta l_{m, x_i}(x_s) = 0$.

We can condense these four cases into a single expression as:
\begin{align}
\label{eqn:fastpam1_trick}
    \Delta l_{m, x_i}(x_s) = - d(m_{c(s)}, x_s) &+ \mathbbm{1}_{x_s \notin \mathcal{C}_m}\min[d(m_{c(s)}, x_s), d(x_i,x_s)] +  \\ 
    & \mathbbm{1}_{x_s \in \mathcal{C}_m}\min[d(m_{c^{(2)}(s)}, x_s), d(x_i, x_s)]
\end{align}

where $\mathcal{C}_m$ denotes the set of points whose closest medoid is $m$ and $d(m_{c(s)}, x_s)$ and $d(m_{c^{(2)}(s)}, x_s)$ are the distance from $x_s$ to its nearest and second nearest medoid, respectively, before the swap is performed. 

Note that Equation \ref{eqn:fastpam1_trick} depends only on $m$ via the terms $\mathbbm{1}_{x_s \in \mathcal{C}_m}$ and $\mathbbm{1}_{x_s \notin \mathcal{C}_m}$, so for the $k-1$ values of $m$ for which $x_s \notin \mathcal{C}_m$, we must have that $\Delta l_{m, x_i}(x_s)$ is equal (for fixed $x_i$ and $x_s$).
\end{proof}


\begin{theorem} 
Let $\mathcal{X} = \{x_1, \dots, x_S\}$ be the reference points of $D_i$, and let $\pi$ be a random permutation of $\{1, \dots, S\}$. Then for any $c \leq S$, 
$\sum_{q=1}^c D_i(x_{\pi(p_q)})$ has the same distribution as $\sum_{q=1}^c D_i(x_{p_q})$, where each $p_q$ is drawn uniformly without replacement from $\{1, \ldots, S\}$.

\end{theorem}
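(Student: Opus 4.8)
The plan is to show that both sides of the claimed distributional equality are, in fact, distributions over the \emph{same} random object: a sum of $c$ distinct values $D_i(x_{j_1}), \dots, D_i(x_{j_c})$ where $\{j_1, \dots, j_c\}$ is a uniformly random $c$-subset of $\{1, \dots, S\}$. The right-hand side, $\sum_{q=1}^c D_i(x_{p_q})$ with the $p_q$ drawn uniformly without replacement, is by definition such a sum; the classical fact that an unordered without-replacement sample of size $c$ from $\{1, \dots, S\}$ is a uniformly random $c$-subset handles this side immediately. So the entire content of the statement is to check that the left-hand side, $\sum_{q=1}^c D_i(x_{\pi(p_q)})$ where $\pi$ is a random permutation (independent of the $p_q$), also yields a uniformly random $c$-subset of indices.

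First I would fix the realization of $\pi$ and condition on it. Given $\pi$, the set $\{p_1, \dots, p_c\}$ is a uniformly random $c$-subset of $\{1, \dots, S\}$, so the image set $\{\pi(p_1), \dots, \pi(p_c)\}$ is the $\pi$-image of a uniformly random $c$-subset. Since $\pi$ is a bijection, applying it to a uniformly random $c$-subset yields again a uniformly random $c$-subset (the map $A \mapsto \pi(A)$ is a bijection on the collection of $c$-subsets, hence pushes the uniform distribution to the uniform distribution). Thus conditionally on $\pi$, the index set $\{\pi(p_1), \dots, \pi(p_c)\}$ is uniform over $c$-subsets; since this conditional law does not depend on $\pi$, it is also the unconditional law. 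Because the value of the sum $\sum_{q=1}^c D_i(x_{j_q})$ is a symmetric function of the index set $\{j_1,\dots,j_c\}$ (addition is commutative, so the order in which the indices appear is irrelevant), the two sides are equal in distribution as random variables. A brief remark that the $D_i(x_j)$ may themselves be random is in order: one conditions additionally on the family $(D_i(x_1), \dots, D_i(x_S))$, runs the above argument, and then integrates out — the sampling mechanism for indices is independent of these values, so the identity is preserved.

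I do not expect a genuine obstacle here; the statement is essentially a change-of-variables/symmetrization fact. The one point requiring a little care is making precise the claim that ``$\pi$ applied to a uniform $c$-subset is a uniform $c$-subset'' — i.e. that sampling without replacement and then relabeling by a fixed permutation is the same as sampling without replacement directly — and then correctly handling the order of conditioning (on $\pi$, and on the possibly-random reference values $D_i(x_s)$) so that the final statement is about unconditional distributions. Everything else is bookkeeping.
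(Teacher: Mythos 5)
Your proposal is correct, and it actually takes a more careful route than the paper's own proof. The paper argues only at the level of individual terms: it notes that any fixed index is equally likely to appear among the first $c$ positions of a uniformly random permutation as among a size-$c$ without-replacement sample, concludes that $\pi(p_q)$ and $p_q$ have the same marginal distribution for each $q$, and then asserts that the sums therefore agree in distribution. Strictly speaking, termwise equality of marginals is not enough to equate the laws of the sums — one needs the joint law of the whole index collection. Your argument supplies exactly this missing piece: you show that both sampling mechanisms produce a uniformly random $c$-subset of $\{1,\dots,S\}$ (using that a bijection pushes the uniform distribution on $c$-subsets to itself), and then invoke the symmetry of the sum in its index set to conclude. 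Your additional remarks — conditioning on $\pi$ and observing the conditional law is $\pi$-free, and conditioning on the values $(D_i(x_1),\dots,D_i(x_S))$ before integrating out — are exactly the bookkeeping needed to make the statement airtight; the paper omits both. So the two proofs share the same underlying intuition (relabeling by a permutation does not change a uniform without-replacement sample), but yours closes a genuine gap in the published argument.
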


\begin{proof}
Since $\pi$ is a permutation drawn uniformly at random over the set of possible permutations, the probability that any integer $p_q$ appears in the first $c$ elements of the ordered sequence $\{\pi(1), \dots, \pi(S)\}$ is $\frac{c}{k}$.
This is the same as the probability that any integer $p_q$ occurs in the first $c$ elements of the ordered sequence $\{1, \dots, S\}$.
This, in turn, implies that $\pi(p_q)$ and $p_q$ have the same distribution for all $q$.
Since the indices $\pi(p_q)$ and $p_q$ have the same distribution, so do the term-wise elements of each of the sums $\sum_{q=1}^c D_i(x_{\pi(p_q)})$ and $\sum_{q=1}^c D_i(x_{p_q})$.

\end{proof}


The proofs of Theorems \ref{thm:specific} and \ref{thm:nlogn} are similar to those for the original BanditPAM algorithm; however, they must now be adapted to the modified algorithm that uses Theorem \ref{thm:fastpam1_trick} to change the best-arm identification problem.

\begin{theorem} 
For $\delta = 1/kn^3$, with probability at least $1-\tfrac{2}{n}$, Algorithm \ref{alg:bp++_swap}
returns the optimal swap to perform using a total of $M$ distance computations, where
\aln{
E[M] \leq 6n + \sum_{i \in [n]}  \min \left[ \frac{12}{\Delta_i^2} \left(\sigma_i+\sigma_{i^*} \right)^2 \log kn + B, 3n \right].
}
\end{theorem}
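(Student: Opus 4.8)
The plan is to adapt the UCB-style best-arm-identification analysis of the original BanditPAM SWAP step to the arm structure that the Virtual Arms technique induces in Algorithm~\ref{alg:bp++_swap}. First I would fix the clean event $\mathcal{E}$ on which every confidence statement the algorithm relies on holds: for every real arm $i\in[n]$, every virtual arm $j\in[k]$, and every value $t'\le n$ of the reference-point counter, the running estimate satisfies $|\hat\mu_{i,j}-\mu_{i,j}|\le C_{i,j}$. The step that makes this legitimate, and where Theorem~\ref{thm:fastpam1_trick} is used, is that the quantity added to $\hat\mu_{i,j}$ on Line~14 for every $j\ne c(s)$ is precisely the common per-point loss change $\Delta l_{j,x_i}(x_s)$, while Line~11 records the correct value for the single index $j=c(s)$; hence each $\hat\mu_{i,j}$ is a genuine running average of i.i.d.\ draws of the $\sigma_i$-sub-Gaussian observation $f(D_i(x_s),\mathcal{A})$, with Theorem~\ref{thm:perminv} guaranteeing that sampling the reference points in permutation order is distributionally the same as sampling them without replacement. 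A sub-Gaussian tail bound then controls the failure probability of each such statement, and a union bound over the $O(kn^2)$ relevant (real arm, virtual arm, timestep) triples, with $\delta=1/kn^3$ and the confidence width used in the algorithm, yields $\Pr(\mathcal{E}^c)\le 2/n$.

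Next I would establish correctness on $\mathcal{E}$. Because $\mu_{i,j}\ge\mu_i\ge\mu_{i^*}$ for all $i$ and $j$, on $\mathcal{E}$ every upper confidence bound $\hat\mu_{i,j}+C_{i,j}$ is at least $\mu_{i^*}$, whereas the best virtual arm of $i^*$ satisfies $\hat\mu_{i^*,j^*}-C_{i^*,j^*}\le\mu_{i^*}$, so $i^*$ passes the filter on Line~17 at every iteration and is never removed from $\mathcal{S}_{\text{solution}}$. Consequently, if the loop exits with $|\mathcal{S}_{\text{solution}}|=1$ the surviving arm is $i^*$, and otherwise the exact recomputation on Line~22 ranges over a set containing $i^*$; in both cases the returned pair is the true optimum, provided that for the branch terminating on Line~20 the within-$i^*$ estimates have separated, i.e.\ $2C_{i^*}<\Delta_{i^*}$. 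Checking this last point --- the algorithm keeps sampling $i^*$ as long as any competing arm survives, which supplies enough samples --- is the one place where the analysis genuinely departs from the pair-as-arm argument of BanditPAM, and it is what introduces the $i=i^*$ summand into the bound below.

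Then I would bound the distance computations on $\mathcal{E}$. Since $C_{i,j}=\sigma_i\sqrt{\log(1/\delta)/(t'+1)}$ does not depend on $j$, the same filtering argument shows that a suboptimal real arm $i$ is dropped once $\mu_i-2C_i>\mu_{i^*}+2C_{i^*}$, i.e.\ once $t'+1>4(\sigma_i+\sigma_{i^*})^2\log(1/\delta)/\Delta_i^2$; combining this with $\log(1/\delta)=\log(kn^3)\le 3\log(kn)$ gives the threshold $12(\sigma_i+\sigma_{i^*})^2\log(kn)/\Delta_i^2$ on the number of sampled reference points, and rounding up to a multiple of the processing batch size contributes the additive $B$. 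Each sampled reference point costs $O(1)$ distance computations for each surviving arm --- at most the three distances $d_1,d_2,d_i$ of Lines~8--10, of which $d_1,d_2$ are shared across arms or served from the cache --- so arm $i$ accounts for at most $\min[\,12(\sigma_i+\sigma_{i^*})^2\log(kn)/\Delta_i^2+B,\ 3n\,]$ distance computations, the $3n$ cap coming from the literal three distances per reference point when an arm is never eliminated (and, because the cache already holds all of arm $i$'s distances once all $n$ reference points have been scanned, the exact pass on Line~22 adds nothing for such arms). Summing over $i\in[n]$ and adding the $O(n)$ overhead from the shared per-reference-point distances and initialization bounds $M$ conditional on $\mathcal{E}$.

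Finally, to pass to $E[M]$ I would observe that even off $\mathcal{E}$ the algorithm performs at most $O(n^2)$ distance computations (at most $n$ reference points, each touching at most $n$ arms, plus the final exact pass), so $E[M]\le E[M\mathbf{1}_{\mathcal{E}}]+\Pr(\mathcal{E}^c)\cdot O(n^2)\le\big(\text{conditional bound}\big)+\tfrac{2}{n}\cdot O(n^2)$, and the $\tfrac{2}{n}\cdot O(n^2)=O(n)$ bad-event term is absorbed into the leading $6n$. I expect the main obstacle to be the first step together with the within-winner subtlety in the second: one must verify carefully that Algorithm~\ref{alg:bp++_swap}'s asymmetric per-virtual-arm bookkeeping on Lines~11 and~14 really produces unbiased, $\sigma_i$-sub-Gaussian running means for every $\mu_{i,j}$ with the stated $j$-independent confidence widths, which is exactly where Theorem~\ref{thm:fastpam1_trick} does the work and where the off-the-shelf BanditPAM concentration machinery would otherwise not transfer verbatim.
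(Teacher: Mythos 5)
Your proposal is correct and follows essentially the same route as the paper's own proof: the same clean event from Hoeffding's inequality union-bounded over the $kn^2$ confidence intervals with $\delta = 1/kn^3$, the same argument that the optimal arm survives every filtering step, the same per-arm elimination threshold of $\tfrac{12}{\Delta_i^2}(\sigma_i+\sigma_{i^*})^2\log kn$ capped at $3n$, and the same decomposition $E[M] \leq E[M \mid \text{clean}] + \tfrac{2}{n}\cdot O(n^2)$ to absorb the bad event into the $6n$ term. Your explicit attention to the $i=i^*$ within-arm gap and to why the asymmetric updates on Lines 11 and 14 still yield unbiased sub-Gaussian running means is a slightly more careful rendering of points the paper's proof handles only implicitly, but it is not a different argument.
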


\begin{proof}
The proof is similar to that of the original BanditPAM algorithm \cite{tiwari2020banditpam}; however, it must be adapted for the Virtual Arms technique.

First, we show that, with probability at least $1-\tfrac{2}{n}$, all confidence intervals computed throughout the algorithm are true confidence intervals, in the sense that they contain the true parameters $\mu_{i,j}$.
To see this, notice that for a fixed $i,j$ and a fixed iteration of the algorithm, $\hat \mu_{i,j}$ is the average of $t$ i.i.d.~samples of a $\sigma_i$-sub-Gaussian distribution.
From Hoeffding's inequality, 
\aln{
\Pr\left( \left| \mu_{i,j} - \hat \mu_{i,j} \right| > C_{i,j} \right) \leq 2 \exp \left({-\frac{t C_{i,j}^2}{2\sigma_i^2}}\right)  =\vcentcolon 2 \delta.
}

where we used $\mathbb{E}[\hat{\mu_{i,j}}] = \mu_{i,j}$ by Theorem \ref{thm:perminv}.

Note that there are at most $kn^2$ such confidence intervals computed across all arms and all steps of the algorithm.
With $\delta = 1/kn^3$, we see that $\mu_{i,j} \in [\hat \mu_{i,j} - C_{i,j}, \hat \mu_{i,j} + C_{i,j}]$ for every ${i,j}$ and for every step of the algorithm with probability at least $1-\frac{2}{n}$, by the union bound over at most $kn^2$ confidence intervals.

Next, we prove the correctness of Algorithm \ref{alg:bp++_swap}.
Let $(i^*, j^*) = \argmin_{(i,j)} \mu_{i,j}$ be the desired output of the algorithm.
First, observe that the main \texttt{while} loop in the algorithm can only run $n$ times, so the algorithm must terminate.
Furthermore, if all confidence intervals throughout the algorithm are correct, it is impossible for $(i^*, j^*)$ to be removed from the set of candidate target points. 
Hence, $(i^*, j^*)$ (or some $(i', j')$ with $\mu_{i', j'} = \mu_{i^*, j^*}$) must be returned upon termination with probability at least $1-\frac{2}{n}$.

Finally, we consider the complexity of Algorithm \ref{alg:bp++_swap}. 
Let $t$ be the total number of arm pulls computed for each of the arms remaining in the set of candidate arms at some point in the algorithm.
As in Section \ref{sec:theory}, let $\mu_i \vcentcolon= \min_{j \in [k]} \mu_{i, j}$, let $i^* \vcentcolon= \argmin_{i \in [n]} \mu_i$ be the optimal point to swap in to the set of medoids, so that the medoid to swap out is $j^* \vcentcolon= \argmin_{j \in [k]} \mu_{i^*,j}$, and for another candidate point $i \in [n]$ with $i \neq i^*$, let $\Delta_i \vcentcolon = \mu_i - \mu_{i^*}$.
Furthermore, for $i = i^*$, let $\Delta_i \vcentcolon= \min^{(2)}_j \mu_{i, j} - \min_j \mu_{i, j}$, where $\min^{(2)}_j$ denotes the second largest value over the indices $j$.  
Notice that, for any suboptimal arm $(i, j) \ne (i^*, j^*)$, we must have
$C_{i, j} = \sigma_{i} \sqrt{ \log(\tfrac{1}{\delta}) /t}$.
With $\delta = 1/kn^3$ as above and $\Delta_{i, j} \vcentcolon= \mu_{i, j} - \mu_{i^*, j^*}$, if $t > \frac{12}{\Delta_{i}^2} \left(\sigma_{i}+\sigma_{i^*}\right)^2 \log kn$,
then
\begin{align}
2(C_{i, j} + C_{i^*, j^*}) &= 2 \left( \sigma_{i} + \sigma_{i^*}\right) \sqrt{  { \log(kn^3) } / {t  }} \\
&< \Delta_i \\
&\vcentcolon= \min_{j'}\Delta_{i,j'} \\
&\leq \Delta_{i, j} \\
&= \mu_{i, j} - \mu_{i^*, j^*},
\end{align}

for all $j$, and so
\begin{align*}
    \hat \mu_{i, j} - C_{i, j} &> \mu_{i, j} - 2C_{i, j} \\
    &= \mu_{i^*, j^*} + \Delta_{i, j} - 2C_{i, j} \\
    &\geq \mu_{i^*, j^*} + 2 C_{i^*, j^*} \\
    &> \hat \mu_{i^*, j^*} + C_{i^*, j^*}
\end{align*}

implying that $(i,j)$ must be removed from the set of candidate arms by iteration $t$.
Hence, the number of distance computations $M_i$ required to exclude \textit{all} arms $(i, j)$ where $i \neq i^*$, or for all $(i, j)$ with $i = i^*$ but $j \neq j^*$, is at most
\aln{
M_i \leq \min \left[ \frac{12}{\Delta_{i}^2} \left( \sigma_i + \sigma_{i^*}\right)^2 \log kn + 1, 3n \right].
}
Notice that this holds simultaneously for all $(i,j)$ with probability at least $1-\tfrac{2}{n}$.
We conclude that the total number of distance computations $M$ satisfies
\aln{
E[M] & \leq E[M | \text{ all confidence intervals are correct}] + \frac{2}{n} (3n^2) \\
& \leq 6n + \sum_{i \in [n]}  \min \left[ \frac{12}{\Delta_{i}^2} \left( \sigma_i + \sigma_{i^*}\right)^2 \log kn + 1, 3n \right]
}
where we used the fact that the maximum number of distance computations per target point is $3n$.
\end{proof}


\begin{theorem} 
If BanditPAM++ is run on a dataset $\X$ with $\delta = 1/kn^3$, then it returns the same set of $k$ medoids as PAM with probability $1-o(1)$. 
Furthermore, 
the total number of distance computations $M_{\rm total}$ required satisfies
\aln{
E[M_{\rm total}] = O\left( n \log kn \right).
}
\end{theorem}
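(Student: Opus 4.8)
The plan is to bootstrap from Theorem~\ref{thm:specific}, which controls a single SWAP iteration, to the whole run of BanditPAM++, using a union bound over all steps for correctness and linearity of expectation for the cost. Recall BanditPAM++ performs $k$ BUILD steps followed by at most $T$ SWAP iterations, with $T = O(k)$ by assumption, and that its BUILD phase is literally the BUILD phase of BanditPAM, now run in the order of the fixed permutation $\pi$, which is legitimate by Theorem~\ref{thm:perminv}.

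For correctness, let $\mathcal{E}$ be the event that every BUILD step and every SWAP iteration returns the \emph{exact} minimizer of its associated objective --- Equation~\eqref{eqn:build-next-medoid} for a BUILD step and Equation~\eqref{eqn:next-swap} for a SWAP step. Theorem~\ref{thm:specific} gives that each SWAP iteration errs with probability at most $2/n$, and the analysis of the BanditPAM BUILD step (which BanditPAM++ inherits unchanged) gives an $O(1/n)$ error probability per BUILD step; a union bound over the $k+T = O(k)$ steps yields $\Pr[\mathcal{E}] = 1 - o(1)$, since $k$ is held fixed. On $\mathcal{E}$ I would argue by induction on the step index that the medoid set maintained by BanditPAM++ agrees, after each step, with the one maintained by PAM: the base case is the empty set, and the inductive step holds because, given identical current medoid sets, PAM picks the exact greedy BUILD medoid (resp.\ greedy swap) and, on $\mathcal{E}$, so does BanditPAM++. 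Hence both algorithms output the same $k$ medoids with probability $1-o(1)$.

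For the complexity, write $E[M_{\rm total}]$ as the sum over the $k$ BUILD steps and the $\le T$ SWAP iterations of their expected costs. For a SWAP iteration, Theorem~\ref{thm:specific} bounds the expected cost by $6n + \sum_{i\in[n]}\min\big[\tfrac{12}{\Delta_i^2}(\sigma_i+\sigma_{i^*})^2\log kn + B,\, 3n\big]$; under the stated assumptions --- the $\sigma_i$ uniformly bounded and the gaps $\Delta_i$ non-degenerate in the sense that $\sum_i \min(1/\Delta_i^2, n) = O(n)$ --- this is $O(n\log kn)$. An identical argument applied to the BUILD best-arm problem (arms $\X\setminus\mathcal{M}_l$, reference points $\X$, sub-Gaussian summands by Theorem~\ref{thm:perminv}) gives $O(n\log kn)$ per BUILD step. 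Summing over the $k$ BUILD steps and the $T=O(k)$ SWAP iterations and absorbing the $O(k)$ (and the problem-dependent factor $c(k)$) into the constant gives $E[M_{\rm total}] = O(n\log kn)$.

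The main obstacle is the same one that arises in the BanditPAM analysis: bounding $\sum_i \min(1/\Delta_i^2, n)$ by $O(n)$, which is exactly where the non-degeneracy assumption on the distribution of arm gaps is used and where the "$O(n)$" (rather than something worse) enters --- this should be stated carefully, matching the ``Note on assumptions.'' A secondary point needing care is that the correctness union bound splices the SWAP guarantee of Theorem~\ref{thm:specific} together with the separate BUILD guarantee inherited from BanditPAM, so one should verify that Theorem~\ref{thm:perminv} indeed licenses reusing the single fixed permutation $\pi$ across all steps of both phases without disturbing the sub-Gaussian concentration invoked in each analysis.
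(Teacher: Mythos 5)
Your proposal is correct and follows essentially the same route as the paper: invoke Theorem~\ref{thm:specific} per SWAP call, union bound over the $O(k)$ calls for correctness, and use linearity of expectation together with the non-degeneracy and bounded-$\sigma_i$ assumptions to turn the per-call bound into $O(n \log kn)$ overall. The only (cosmetic) differences are that you explicitly fold the BUILD steps into the union bound where the paper simply defers to BanditPAM's existing BUILD analysis, and that you posit $\sum_i \min(1/\Delta_i^2, n) = O(n)$ directly where the paper bounds via $\min_i \Delta_i$ and cites the conversion argument of Bagaria et al.\ for sub-Gaussian arm means.
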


\begin{proof}
This proof is similar to that for the original BanditPAM algorithm \cite{tiwari2020banditpam}.
Since the BUILD step of BanditPAM++ is the same as the BUILD step of the original BanditPAM algorithm, it suffices to use the fact from Theorem \ref{thm:specific} that each SWAP step of BanditPAM++ agrees with each SWAP step of BanditPAM and PAM.

From Theorem \ref{thm:specific}, the probability that Algorithm \ref{alg:bp++_swap} does not return the target point ${i,j}$ with the smallest value of $\mu_{i,j}$ in a single call, i.e. that the result of Algorithm \ref{alg:bp++_swap} will differ from the corresponding step in PAM, is at most $2/n$.
By the union bound over all $T$ calls to Algorithm \ref{alg:bp++_swap}, the probability that \algname does not return the same set of $k$ medoids as PAM is at most $2T/n = o(1)$, since $T$ is taken a predefined constant. This proves the first claim of Theorem \ref{thm:nlogn}.

It remains to show that $E[M_{\rm total}] = O( n \log kn)$. Note that, if a random variable is $\sigma$-sub-Gaussian, it is also $\sigma'$-sub-Gaussian for $\sigma' > \sigma$.
Hence, if we have a universal upper bound $\sigma_{\rm ub}> \sigma_i$ for all ${i,j}$, Algorithm \ref{alg:bp++_swap} can be run with $\sigma_{\rm ub}$ replacing each $\sigma_i$.
In that case, a direct consequence of Theorem \ref{thm:specific} is that the total number of distance computations per call to Algorithm \ref{alg:bp++_swap} satisfies
\al{
E[M] & \leq 6n + \sum_{i \in [n]}  48 \frac{\sigma^2_{\rm ub}}{\Delta_{i}^2} \log kn + 1 
\leq  6n + 48 \left(\frac{\sigma_{\rm ub}}{\min_{i} \Delta_{i}}\right)^2
n \log kn.
\label{eqn:expectedM}
}
Furthermore, as proven in Appendix 2 of \cite{bagaria2018medoids}, such an instance-wise bound, which depends on the $\Delta_{i}$'s, converts to an $O(n \log kn)$ bound when the $\mu_{i,j}$'s follow a sub-Gaussian distribution. 
Moreover, since at most $T$ calls to Algorithm \ref{alg:bp++_swap} are made, from \eqref{eqn:expectedM} we see that the total number of distance computations $M_{\rm total}$ required by \algname satisfies $E[M_{\rm total}] = O( n \log kn)$.
\end{proof}

\section{Additional Experiments}
\label{app:add_exps}

In this appendix, we provide additional experiments, including results moved here from the main text due to space constraints.
First, we present the wall-clock runtimes and sample complexities of various algorithms, including BanditPAM++, across a variety of datasets and metrics, for different dataset sizes $n$.
We then show that BanditPAM++ is not sensitive to the choice of error probability $\delta$.
We then demonstrate that the loss of various real-world problems stops decreasing after approximately $T = k$ swaps, as has been observed in prior work \cite{tiwari2020banditpam, schubert2019faster}.
Finally, we provide a breakdown of the speedups across both the BUILD and SWAP steps of BanditPAM++ over BanditPAM.

\subsection{Scaling with \texorpdfstring{$n$}{Lg}}
\label{subsec:n_scaling}

\begin{figure}[t!]
    \begin{subfigure}{.49\textwidth}
        \centering
        \includegraphics[width=\linewidth]{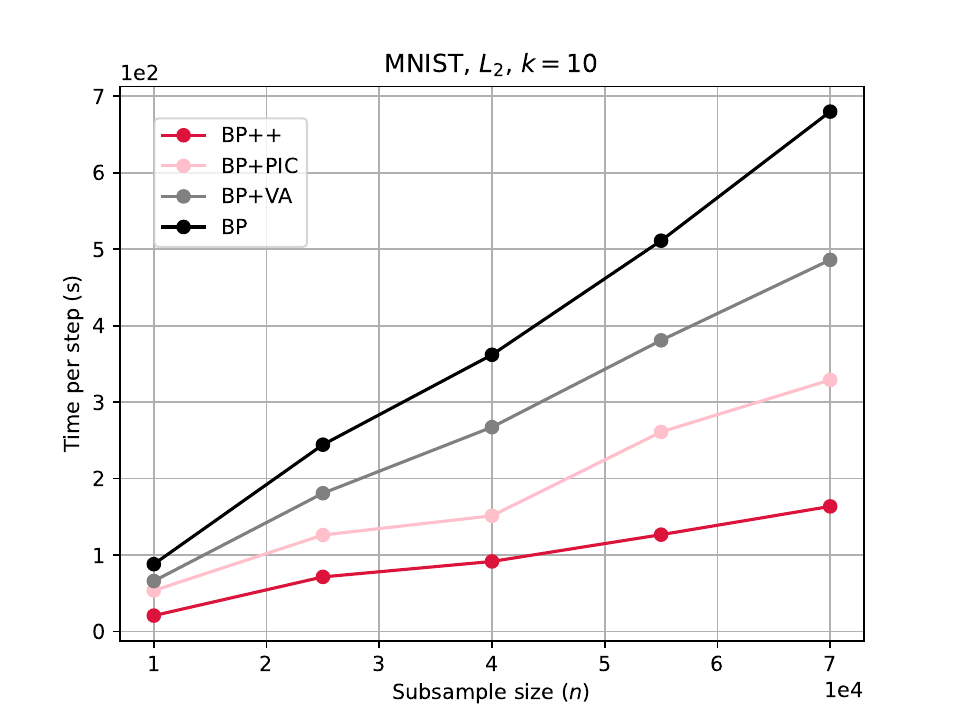} 
        \caption{}
    \end{subfigure}
    \begin{subfigure}{.49\textwidth}
      \centering
      \includegraphics[width=\linewidth]{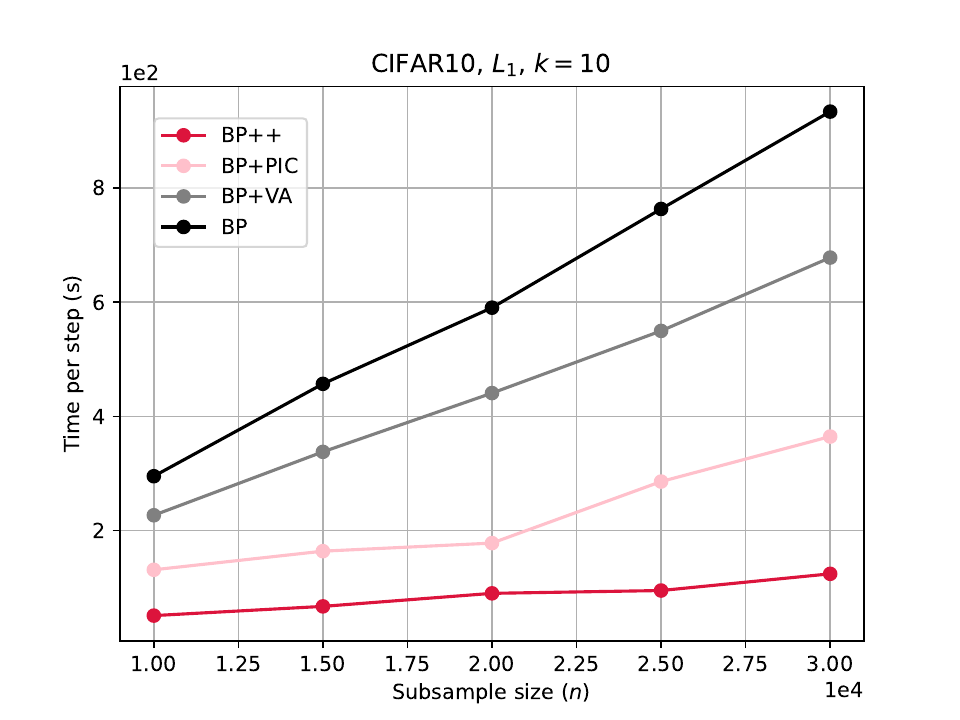} 
      \caption{}
    \end{subfigure}
    \begin{center}
    \begin{subfigure}{.49\linewidth}
      \includegraphics[width=\linewidth]{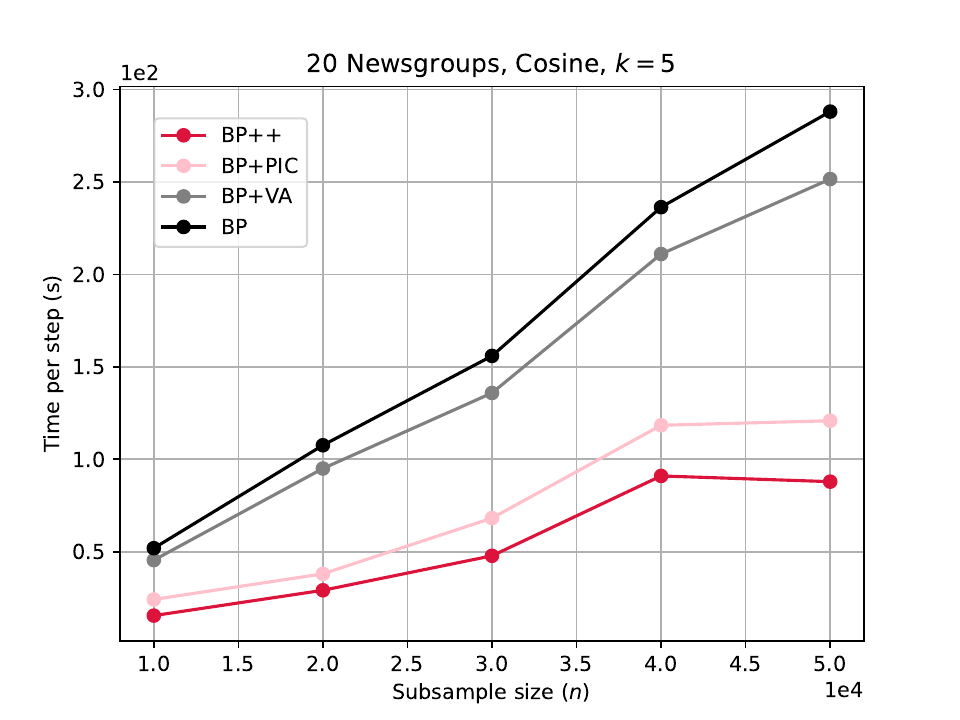} 
      \caption{}
    \end{subfigure}
    \end{center}
    \caption{Average wall-clock runtime versus dataset size $n$ for various dataset sizes, metrics, and values of $k$. BP++ outperforms BP+PIC and BP+VA, both of which outperform BP. Negligible error bars are omitted for clarity.}
    \label{fig:n_runtime}
\end{figure}

Appendix Figure \ref{fig:n_runtime} compares the wall-clock runtime scaling with $n$ of BP, BP+PIC, BP+VA, and BP++ on the MNIST ($L_2$ distance, $k=10$), CIFAR10 ($L_1$ distance, $k=10$), and 20 Newsgroups (cosine distance, $k=5$) datasets.
Across all data subset sizes, metrics, and values of $k$, BP++ outperforms each of BP+VA and BP+PIC, both of which in turn outperform BP.

\begin{figure}[t]
    \begin{subfigure}{.49\textwidth}
      \centering
      \includegraphics[width=\linewidth]{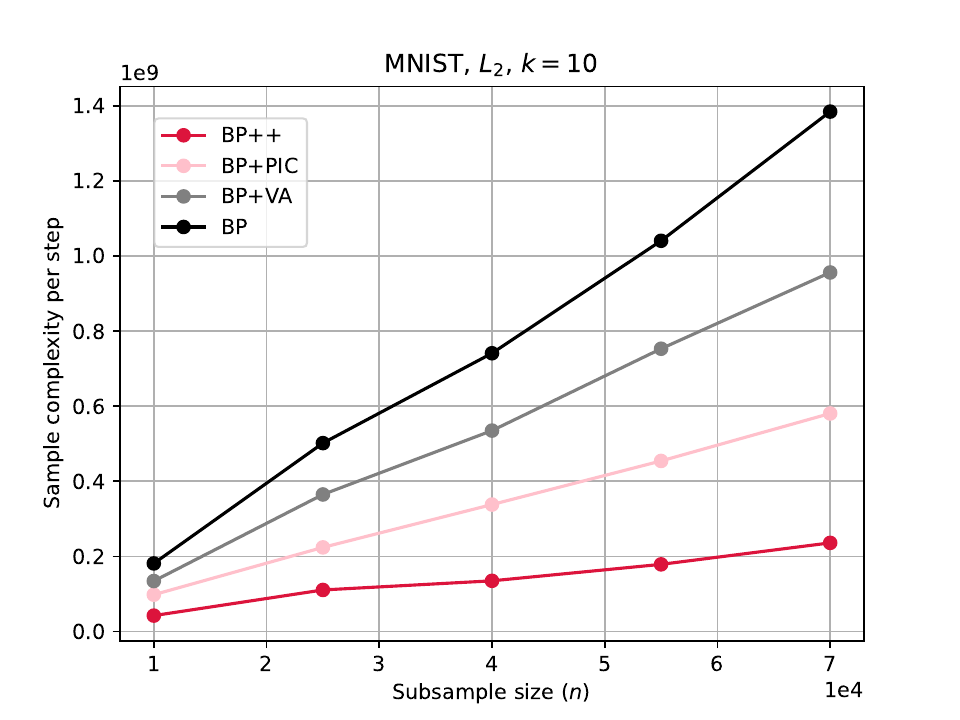}  
      \caption{}
    \end{subfigure}
    \begin{subfigure}{.49\textwidth}
      \centering
      \includegraphics[width=\linewidth]{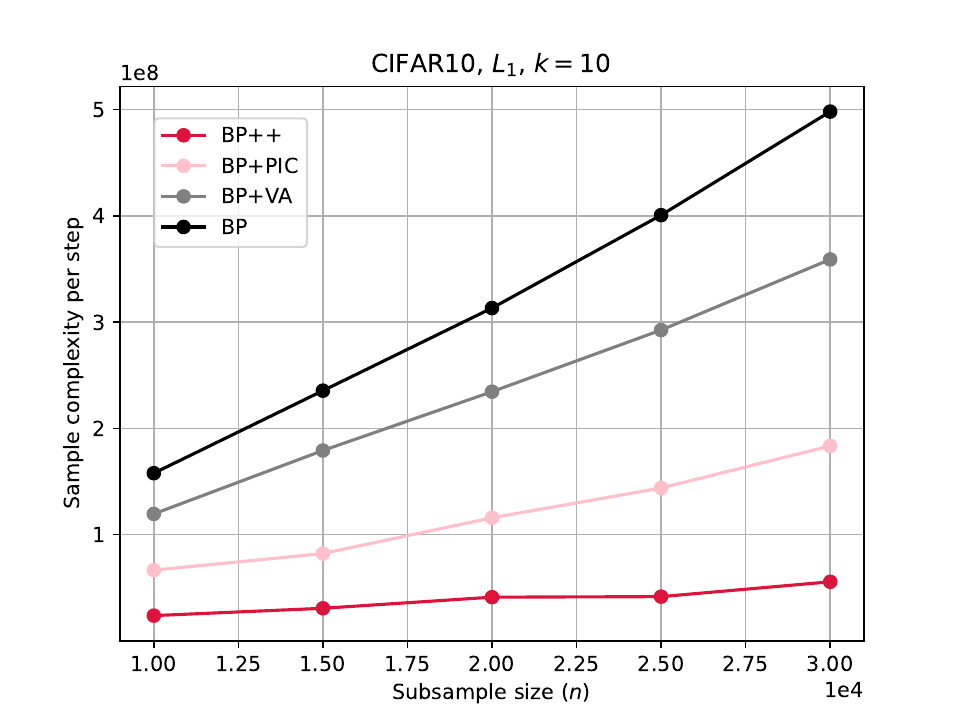}  
      \caption{}
    \end{subfigure}
    \begin{center}
    \begin{subfigure}{.49\textwidth}
      \centering
      \includegraphics[width=\linewidth]{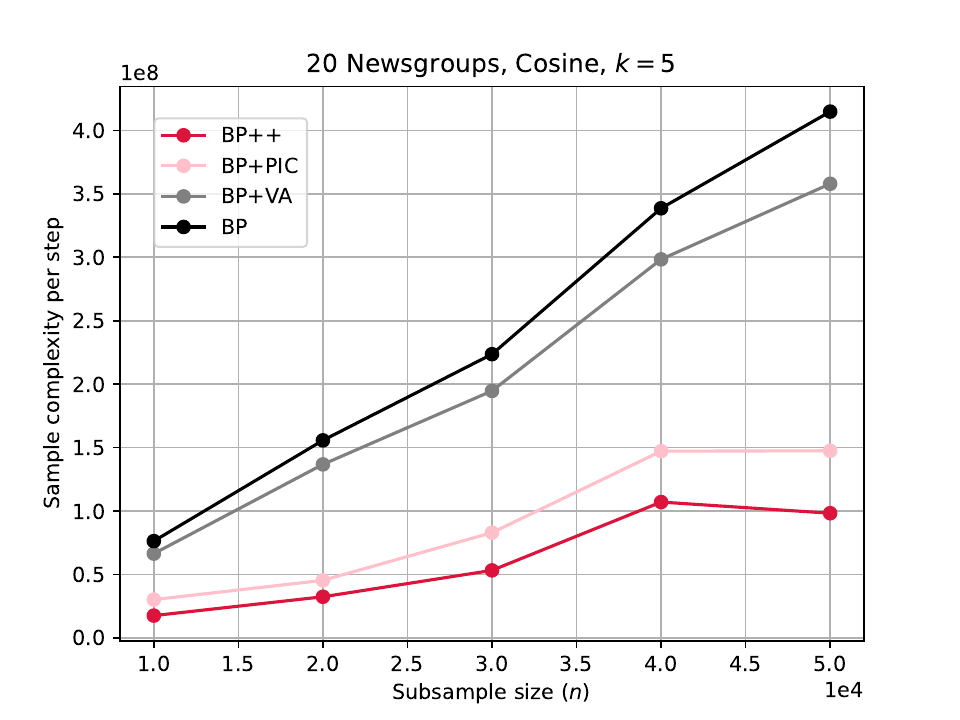}  
      \caption{}
    \end{subfigure}
    \end{center}
    
    \caption{Average sample complexity versus dataset size $n$ for various dataset sizes, metrics, and values of $k$. BP++ outperforms BP+PIC and BP+VA, both of which outperform BP. Negligible error bars are omitted for clarity.}
    \label{fig:n_complexity}
\end{figure}

We also show the results of the same experiments as in Appendix \ref{fig:n_runtime}, but where we measure sample complexity (number of distance computations) instead of wall-clock runtime, in Appendix Figure \ref{fig:n_complexity}.
Appendix Figure \ref{fig:n_complexity} compares the sample complexity versus data subsample size $n$ of each algorithm for the same experimental settings as in Appendix Figure \ref{fig:n_runtime}.
In these experiments, the cost of a distance computation is set to $1$ and the cost of a cache hit is set to $0$.
The results are qualitatively similar to those in Appendix Figure \ref{fig:n_runtime}.

\subsection{Loss of BanditPAM++ with varying \texorpdfstring{$\delta$}{Lg}}
\label{subsec:varying_delta}

We now demonstrate that BanditPAM++ is not sensitive to the choice of error probability hyperparameter $\delta$.
Appendix Table \ref{table:varying_delta} shows the loss of BanditPAM++, normalized to the loss of BanditPAM, across various values of $\delta$. 
The original BanditPAM algorithm has been shown to be insensitive to $\delta$ and agrees with PAM \cite{tiwari2020banditpam}
Therefore, Appendix Table \ref{table:varying_delta} implies that BanditPAM++ agrees with PAM as well.

\begin{table}[ht!]
\small
\centering
\begin{tabular}{lllll}
\toprule
Dataset      & $10^{-2}$  & $10^{-3}$ & $10^{-5}$ & $10^{-10}$ \\
\midrule
MNIST ($L_2, k=10$) & 1.00  & 1.00  & 1.00  & 1.00  \\
CIFAR10 ($L_1, k=10$) & 1.00  & 1.00 & 1.00  & 1.00  \\
20 Newsgroups (Cosine, $k=5$) & 1.00 & 1.00  & 1.00  & 1.00  \\
\bottomrule
\end{tabular}
\vspace{0.2em}
\caption{Loss of BanditPAM++, normalized to loss of BanditPAM, with $\delta$ ranging from $10^{-2}$, $10^{-3}$, $10^{-5}$, and $10^{-10}$. BanditPAM++ has the exact same clustering loss with BanditPAM (and therefore PAM) for various values $\delta$.} 
\label{table:varying_delta}
\end{table}

\subsection{Loss of BanditPAM++ with varying \texorpdfstring{$T$}{Lg}}
\label{subsec:varying_T}

We now demonstrate that the choice of maximum swap steps $T = k$ is reasonable in practice.
It has been observed in prior work that after $k$ swap steps, the clustering loss (Equation \ref{eqn:clustering_loss}) does not significantly decrease \cite{tiwari2020banditpam, schubert2019faster}.
Appendix Figure \ref{fig:varying_T} demonstrates the clustering loss of BanditPAM++ as $T$ increases.
Beyond $T = k$, the clustering loss decreases much more slowly.

\begin{figure}[ht]
    \centering
    \begin{subfigure}{.49\linewidth}
        \centering
        \includegraphics[width=\linewidth]{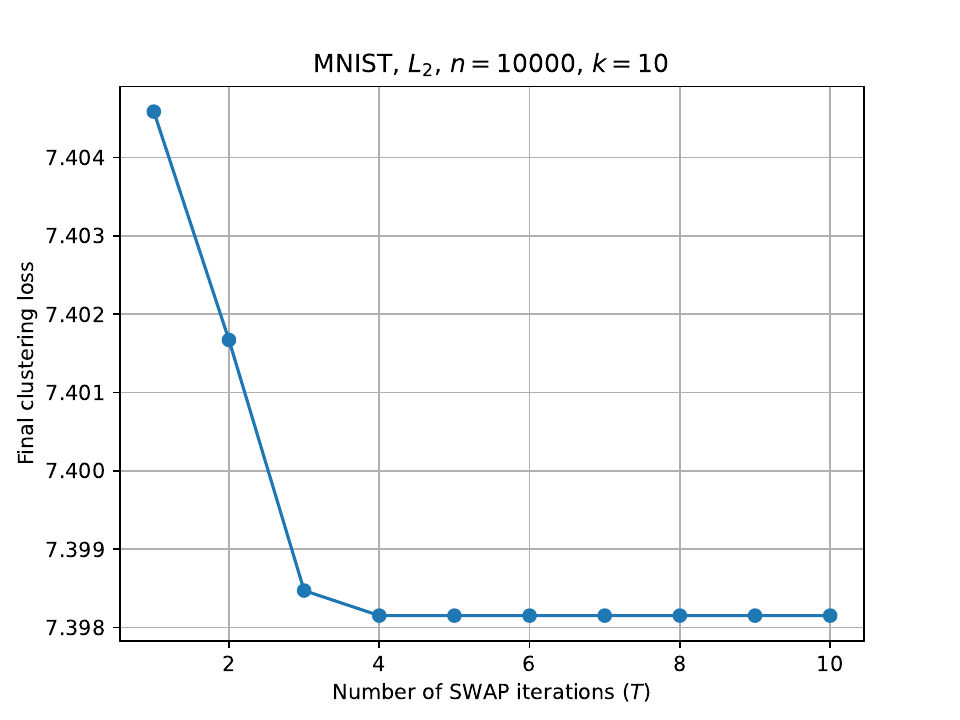} 
        \caption{}
    \end{subfigure}
    \begin{subfigure}{.49\linewidth}
        \centering
        \includegraphics[width=\linewidth]{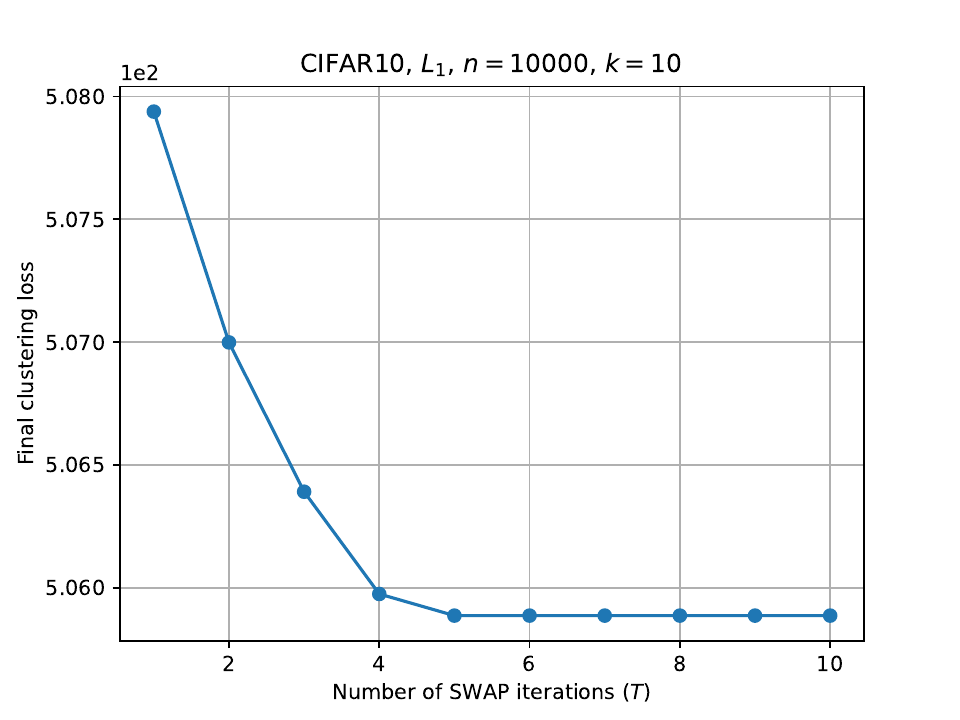} 
        \caption{}
    \end{subfigure}
    \begin{center}
    \begin{subfigure}{.49\linewidth}
        \centering
        \includegraphics[width=\linewidth]{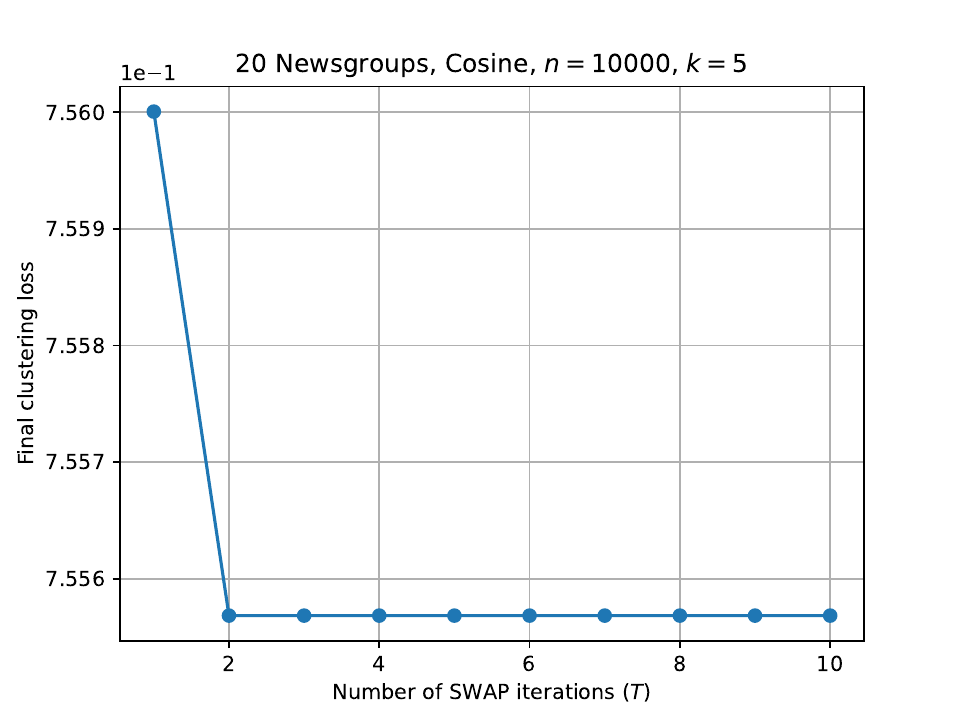} 
        \caption{}
    \end{subfigure}
    \end{center}
    
    \caption{Clustering loss versus maximum number of SWAP iterations, $T$, for the MNIST, CIFAR10, and 20 Newsgroups datasets for various values of $k$. Beyond $T = k$, the loss shows very little change. BanditPAM++ and BanditPAM have the same loss for all $T$ and track the same optimization trajectory.}    
    \label{fig:varying_T}
\end{figure}

\subsection{Speedups of BUILD and SWAP steps}

We now present the average speedups for both the BUILD and SWAP steps for BanditPAM++ compared to the original BanditPAM algorithm, to separately assess the impacts of the Virtual Arms and Permutation-Invariant Caching techniques.
Whereas the VA only improves the SWAP step, the PIC improves both the BUILD and SWAP steps.

In Appendix Table \ref{table:speedup_table}, we present the wall-clock speedup of BanditPAM++ compared to BanditPAM across the MNIST, CIFAR10, and 20 Newsgroups datasets described in Section \ref{sec:exps}.
BanditPAM++ shows significant gains over BanditPAM in both phases of the algorithm, with the gains in the SWAP phase more pronounced.

\begin{table}[ht]
  \small
  \centering
  \begin{tabular}{llll}
    \toprule
    \cmidrule(r){1-4}
    Dataset      & BUILD + SWAP  & SWAP only & BUILD only\\
    \midrule
    MNIST        & $\times3.07$  |  $\times4.21$ | $\times4.15$  
                 & $\times3.80$  | $\times6.38$ | $\times5.15$  
                 & $\times1.36$  | $\times1.75$ | $\times1.44$ \\
    CIFAR        & $\times5.45$ | $\times$5.77 | $\times10.22$  
                 & $\times8.14$ | $\times6.97$ | $\times10.51$  
                 & $\times1.92$  | $\times2.69$ | $\times2.52$  \\
    20 Newsgroups & $\times2.69$ | $\times4.94$ | $\times6.93$   
                 & $\times4.85$ | $\times6.68$ | $\times9.07$  
                 & $\times1.59$  | $\times1.75$ | $\times1.86$  \\
    \bottomrule
  \end{tabular}
  \vspace{0.2em}
  \caption{Average Runtime Speedup Summary: Wall-clock speedup of BanditPAM++ compared to BanditPAM on the four datasets MNIST, CIFAR10, and 20 Newsgroups. Results are for $n=$ 10,000 for the BUILD and SWAP weighted average, SWAP phase only, and BUILD phase only. The BUILD phase only leverages permutation-invariant caching, whereas the other two settings also leverage Virtual Arms. The three speedup values in each cell correspond to experiments where $k =  5, 10,$ and $15$ respectively.}
  \label{table:speedup_table}
\end{table}

\end{document}